\newtheorem{theorem}{Theorem}[section]
\newtheorem{corollary}{Corollary}[section]
\newtheorem{lemma}{Lemma}[section]
\newtheorem{definition}{Definition}
\newcommand{\rom}[1]{\uppercase\expandafter{\romannumeral #1\relax}}
\newcommand{\citep}{\cite}
\title{Confirmatory Bayesian Online Change Point Detection \\in the Covariance Structure of Gaussian Processes}
\author{
Jiyeon Han\thanks{Equally contributed}\and
Kyowoon Lee\footnotemark[1] \and
Anh Tong\And
Jaesik Choi\footnote{Contact Author}\\
\affiliations
Ulsan National Institute of Science and Technology\\
\emails
\{jiyeon, leekwoon, anhth, jaesik\}@unist.ac.kr
}
\begin{document}

\maketitle

\begin{abstract}
  In the analysis of sequential data, the detection of abrupt changes is important in predicting future events. 
In this paper, we propose statistical hypothesis tests for detecting covariance structure changes in locally smooth time series modeled by Gaussian Processes (GPs). We provide theoretically justified thresholds for the tests, and use them to improve Bayesian Online Change Point Detection (BOCPD) by confirming statistically significant changes and non-changes. Our Confirmatory BOCPD (CBOCPD) algorithm finds multiple structural breaks in GPs even when hyperparameters are not tuned precisely. We also provide conditions under which CBOCPD provides the lower prediction error compared to BOCPD. Experimental results on synthetic and real-world datasets show that our proposed algorithm outperforms existing methods for the prediction of nonstationarity in terms of both regression error and log likelihood.
\end{abstract}

\section{Introduction}

In sequential data, the \emph{change point detection} (CPD) problem, i.e., analysis of the data with the aim of detecting abrupt changes, is an important component in improving the prediction of future events. When the underlying distribution does not change (stationary), predicting future events becomes tractable. However, the stationarity assumption does not hold in practice.

A \emph{change point} (CP) is a specific sequential position at which the underlying distribution changes. CPs play critical roles in numerous real-world applications, including climate modeling \cite{manogaran2018spatial}, speech recognition \cite{panda2016automatic}, image analysis \cite{tewkesbury2015critical}, and human activity recognition \cite{human_activity_1}. In econometrics, \emph{structural breaks}, which essentially apply CPD to regression models, have been studied for decades in an attempt to identify structural stability in the forecasting of time series \cite{chow1960tests,ewing2016volatility}. Trend filtering determines CPs by assuming piecewise linearity in the sequential data \cite{kim2009ell_1}. CPD also plays an important role in the domain adaptation problem, where it is known as a covariate shift \cite{sugiyama2008direct}, as the distribution of the test data changes from that of the training data.   

Most existing CPD methods are based on either statistical (Bayesian) inference or hypothesis tests. Statistical inference methods compute the probability of the occurrence of a change. Bayesian CPD algorithms \cite{barry1993(bayesian-offline),xuan2007(bayesian-offline)} identify CPs using Bayesian framework. BOCPD algorithm \cite{BayesianOnlineChangepoint,garnett2010sequential(michael),GP-BOCPD,kim2015reading} detects CPs sequentially considering the correlated interval, the so-called \emph{run length}, between CPs. Such probabilistic methods, however, do not provide a statistical error bound leading to a lack of reliability and are highly sensitive to selected hyperparameters. 

Hypothesis-test-based approaches determine the existence of changes based on a statistical test, where the error probability is naturally determined during the computation. Representative hypothesis test techniques include kernel methods such as two-sample tests based on the maximum mean discrepancy \cite{KernelTwoSampleTest,M-Statistic}, kernel Fisher discriminant ratio \cite{KernelFisherDiscriminant},
and likelihood ratio tests such as the cumulative sum (CUSUM) test \cite{chernoff1964estimating(CUSUM_mean_1),gombay1996estimators(CUSUM_variance_2)}. 

While the conventional GPs only deal with stationary (globally smooth) functions, GPs with CPs can mimic locally smooth functions \cite{GP-BOCPD}, allowing them to represent many real-world time series data. There have been a number of studies of CPD in GPs using hypothesis tests. For instance, CPD in GPs using p-value test on the likelihood given the null hypothesis of stationary time series have been investigated \cite{isupova2017machine}. If the null distribution is not legitimate, this approach cannot decisively say there is a change even if the null hypothesis is rejected. Another work has studied to detect the mean changes in GPs through likelihood ratio test \cite{keshavarz2015optimal}.   

\begin{figure*}[t!]
\begin{center}
\includegraphics[width=\textwidth]{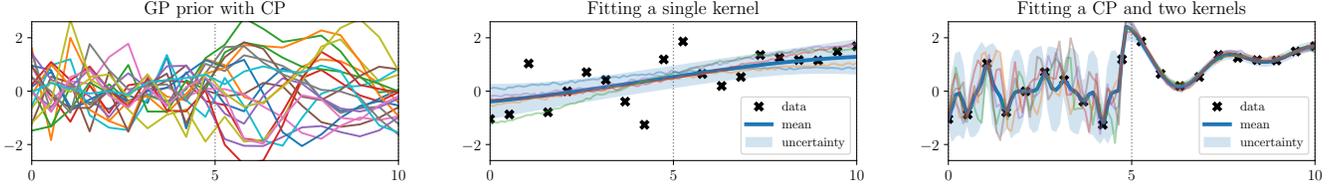}
    \setlength{\belowcaptionskip}{-10pt}
\caption{\textbf{Left}: GP prior with a covariance structure change. \textbf{Middle}: GP posterior after fitting data with single kernel. \textbf{Right}: GP posterior after fitting a CP and two kernels.}
\label{fig:change-point}

\end{center}
\end{figure*}
\raggedbottom

In this paper, we propose new statistical likelihood ratio tests that detect changes in the covariance structure of GPs and build a theoretically justified online detection algorithm, Confirmatory BOCPD, which detects CPs with a reasonable time delay. We also present sufficient conditions under which CBOCPD provides the lower prediction error compared to BOCPD. Moreover, our algorithm adjusts the parameter of BOCPD to avoid false alarms and missed detections when the results of hypothesis tests are sound. When the results are not sound, our algorithm takes advantages of Bayesian inference from BOCPD.



\section{Background}


\subsection{GP Models and Assumptions}

A GP is a random process produced by a collection of random variables such that any finite set of those variables follows a multivariate Gaussian distribution. A GP is completely specified by the mean function $\mu(\cdot)=\mathbb{E}[f(\cdot)]$ and the kernel function $K(\cdot,\cdot)=\text{Cov}(f(\cdot),f(\cdot))$. The kernel function describes distinctive characteristics of the sequential data, such as variance, length scales, and periodicity. For example, in the case of the Radial Basis Function kernel $K(t,t')=\sigma^2\exp(-\frac{(t-t')^2}{2l^2})$, the length scale hyperparameter $l$ controls the smoothness of the sequential data. 


In this paper, we assume that observations are corrupted by the white Gaussian noise with the variance $\sigma_{no}^2$. In modeling sequential data using GPs, the index $t$ is considered as the input, and the corresponding output is modeled as $x_t \sim N(f(t), \sigma_{no}^2)$. Given the GP hyperparameters $\theta_m$ and $\sigma_{no}$, the log marginal likelihood of the GP over the observed samples $\mathbf{x}$ is specified by the mean vector $\mathbf{\mu}$ and the covariance matrix $\Sigma$ for the multivariate Gaussian distribution.\footnote{$\log p(\mathbf{x}|\theta_m, \sigma_{no}) =  -\frac{1}{2}(\mathbf{x}{-}\mathbf{\mu})^T(\Sigma+\sigma_{no}^2 I)^{-1}(\mathbf{x}{-}\mathbf{\mu}) -\frac{1}{2}\log|\Sigma+\sigma_{no}^2 I|-\frac{n}{2}\log2\pi$, where $n$ is the number of observed samples.}

\subsection{Optimal CPD of the Mean}

Here, we briefly review some results on the detection of a single CP in the mean function of a GP \cite{keshavarz2015optimal}. The goal is to set a formal statistical test that can identify a change in the mean function and to quantify the confidence of a change.



We denote the $n$ observed sequential data as $X = \{\it{X}_t\}_{t=\text{1}}^n $. 
Let $t\in \mathcal{C}_n \subseteq \{1,...,n\}$ represents the point at which a sudden change occurs. 
Given sequential points, two hypotheses are used for the likelihood ratio test. 
One is the null hypothesis, $\mathbb{H}_0$, which insists there is no CP. The other is the alternative hypothesis, $\mathbb{H}_1$, which declares there is at least one CP. The hypothesis testing problem is constructed as follows. Given the two hypotheses, the likelihood ratio is defined as $2\mathfrak{L}=2(\text{sup}_{\theta\in\Theta_1}\ell(\theta_1)-\text{sup}_{\theta\in\Theta_0}\ell(\theta_0))$ where $\ell$ is the log likelihood function and $\Theta_0$ and $\Theta_1$ are the parameter spaces of $\mathbb{H}_0$ and $\mathbb{H}_1$ respectively.
The generalized likelihood ratio test (GLRT) is formulated as
$
\mathfrak{T}_{GLRT} = \mathbb{I}\left(2\mathfrak{L}\ge \mathfrak{R}_{n,\delta} \right)
$
with a proper threshold $\mathfrak{R}_{n,\delta}$ and the indicator function $\mathbb{I}(\cdot)$. Here, $\delta$ is the upper bound of the corresponding conditional detection error probability $\varphi_n(\mathfrak{T})$, which is defined as 
\begin{equation}
\varphi_n(\mathfrak{T})=\mathbb{P}(\mathfrak{T}=1|\mathbb{H}_0)+\max_{t\in \mathcal{C}_n}\mathbb{P}(\mathfrak{T}=0|\mathbb{H}_{1,t}).
\end{equation}
We reject $\mathbb{H}_0$ when $\mathfrak{T}_{GLRT}{=}1$, otherwise, we fail to reject $\mathbb{H}_0$. In the mean change detection problem in a GP, we assume that the samples are generated by a GP with zero mean under the null hypothesis, which can be stated as $\mathbb{H}_{0}: \mathbb{E}X = \boldsymbol{0}$. Under the associative alternative hypothesis at time $t$, we assume that there is a change of size $b$ in the mean as 
$\mathbb{H}_{1,t}: \exists\text{ } b \neq 0, \text{ }\mathbb{E}X = \frac{b}{2}\zeta_t$, 
where $\zeta_t \in \mathbb{R}^n$ is given by $\zeta_t(k) := sign(k-t)$ for any $t\in\mathcal{C}_{n}$. 
Combining all possible cases, the alternative hypothesis can be written as
$
\mathbb{H}_{1}: \bigcup_{t\in \mathcal{C}_{n}}\mathbb{H}_{1,t}
$,
which states that there exists at least one CP with jump size $b$. Further, we can write $2\mathfrak{L}$ as
\begin{align}
& X^T\Sigma^{-1}X
 -\min \limits_{t\in  \mathcal{C}_{n}} \min \limits_{b\ne 0}\left[\left(X{-}\frac{b}{2}\zeta_t \right)^T\Sigma^{-1}\left(X{-}\frac{b}{2}\zeta_t\right)\right],\label{eq:meanlr3}
\end{align}
where $\Sigma$ is the covariance matrix of $X$.
Maximizing Equation (\ref{eq:meanlr3}) with respect to $b$ and plugging in the test, we obtain the following test.
$$
\mathfrak{T}_{GLRT} = \mathbb{I}\left( \max_{t\in  \mathcal{C}_{n}} \left| \frac{\zeta_t^T\Sigma^{-1}X}{\sqrt{\zeta_t^T\Sigma^{-1}\zeta_t}}\right|^2 \ge \mathfrak{R}_{n,\delta} \right).
$$
With a suitable threshold $\mathfrak{R}_{n,\delta}$, the error probability is bounded as 
$
\varphi_n(\mathfrak{T}_{GLRT}) \le \delta
$ under the sufficient condition on $b$ \cite{keshavarz2015optimal}, and  
 $\mathfrak{R}_{n,\delta}$ could be chosen to be
$$
\mathfrak{R}_{n,\delta} = 1+2\left[log\left(\frac{2n}{\delta}\right)+\sqrt{log\left(\frac{2n}{\delta}\right)}\right].
$$

\section{CPD of the Covariance} \label{sec:3}
This section presents our new hypothesis tests to detect CPs in the covariance structure of a GP.


\subsection{Motivational Examples}

Figure \ref{fig:change-point} shows how CPD of a covariance structure could affect the quality of a GP regression. The left plot in Figure \ref{fig:change-point} shows samples from a GP with an intended CP in the middle. The middle plot shows samples from a GP model after the hyperparameters have been learnt using the whole datasets. The right plot shows samples from a GP model whose covariance structure breaks and the hyperparameters have been learnt separately. Figure \ref{fig:change-point} suggests that fitting nonstationary data to a time-invariant GP results in an imprecise model. GP regression with a structural break in the covariance structure is more expressive and better suited to the analysis of nonstationary data.

\subsection{Likelihood Ratio Test } \label{sec:3.2}
To construct a test for detecting changes in the covariance structure, we define the null hypothesis as $\mathbb{H}_0: \mathrm{Cov}(X_i,X_j) = K(i,j)$
and the alternative hypothesis as $\mathbb{H}_{1}=\bigcup_{t\in \mathcal{C}_{n}}\mathbb{H}_{1,t}$,
with
\begin{equation}\label{eq_hypothesis}
\mathbb{H}_{1,t}: \mathrm{Cov}(X_i,X_j) = 
\begin{cases}
K(i,j), &i,j < t \\
K'(i,j), &i,j \ge t  \\
K''(i,j), &\text{otherwise}
\end{cases}
\end{equation}
where $K$, $K'$ and $K''$ are the kernel functions.
Let $\Sigma$ and $\Sigma_t'$ denote the covariance matrices for $\mathbb{H}_0$ and $\mathbb{H}_{1,t}$, respectively. The likelihood ratio $2\mathfrak{L}$ is written as
\begin{align}
\max \limits_{t\in  \mathcal{C}_{n}} \left[ X^T(\Sigma)^{-1}X - X^T(\Sigma_{t}^{'})^{-1}X+\ln\left(\frac{|\Sigma|}{|\Sigma_{t}^{'}|}\right)\right].\label{eq:covlr}
\end{align}


\begin{theorem}\label{thm:null_distribution_case}
Let $t^*=\text{argmin}_{t}\left[X^T(\Sigma_{t}^{'})^{-1}X+\ln|\Sigma_{t}^{'}|\right]$. Under the null hypothesis, i.e., $X \sim N(\boldsymbol{0}, \Sigma)$,
\begin{align*}
2\mathfrak{L} - \ln\left(\frac{|\Sigma|}{|\Sigma_{t^*}^{'}|}\right) = \sum_{i=1}^{n}v_i-\sum_{i=1}^{n}\lambda_iu_i
\end{align*}
where $\lambda_1,...,\lambda_n$ are the eigenvalues of $\Sigma^{\frac{1}{2}}(\Sigma_{t^*}^{'})^{-1}\Sigma^{\frac{1}{2}}$ and $u_i,v_i\sim\chi_1^2$, a chi-squared distribution with degree 1.
\end{theorem}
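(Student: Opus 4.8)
The plan is to reduce the left-hand side to a pure difference of quadratic forms and then whiten and diagonalize. First I would unwind the maximization in Equation~(\ref{eq:covlr}): since $X^T\Sigma^{-1}X$ and $\ln|\Sigma|$ do not depend on $t$, maximizing over $t\in\mathcal C_n$ is the same as minimizing $X^T(\Sigma_t')^{-1}X+\ln|\Sigma_t'|$, which is exactly the quantity defining $t^*$. Substituting $t=t^*$ then gives $2\mathfrak{L}=X^T\Sigma^{-1}X-X^T(\Sigma_{t^*}')^{-1}X+\ln(|\Sigma|/|\Sigma_{t^*}'|)$, so that $2\mathfrak{L}-\ln(|\Sigma|/|\Sigma_{t^*}'|)=X^T\Sigma^{-1}X-X^T(\Sigma_{t^*}')^{-1}X$, and the problem becomes: identify the law of this difference of quadratic forms when $X\sim N(\mathbf 0,\Sigma)$.

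Next I would whiten. Writing $X=\Sigma^{1/2}Z$ with $Z\sim N(\mathbf 0,I_n)$ (valid since any GP covariance $\Sigma$ is symmetric positive definite), the first term becomes $X^T\Sigma^{-1}X=Z^TZ=\sum_{i=1}^n Z_i^2$, and each $v_i:=Z_i^2\sim\chi_1^2$. For the second term, set $M:=\Sigma^{1/2}(\Sigma_{t^*}')^{-1}\Sigma^{1/2}$, which is symmetric positive definite, and use its spectral decomposition $M=Q\,\mathrm{diag}(\lambda_1,\dots,\lambda_n)\,Q^T$ with $Q$ orthogonal; the $\lambda_i>0$ are exactly the eigenvalues named in the statement. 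Then $X^T(\Sigma_{t^*}')^{-1}X=Z^TMZ=\sum_{i=1}^n\lambda_i(Q^TZ)_i^2$, and because $Q$ is orthogonal, $Q^TZ\sim N(\mathbf 0,I_n)$, so $u_i:=(Q^TZ)_i^2\sim\chi_1^2$. Subtracting yields $2\mathfrak{L}-\ln(|\Sigma|/|\Sigma_{t^*}'|)=\sum_i v_i-\sum_i\lambda_i u_i$.

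The step I expect to require the most care is the data-dependence of $t^*$: since $t^*$ is a function of $X$ (hence of $Z$), the matrix $M$, the rotation $Q$, and the eigenvalues $\lambda_i$ are all random, so the diagonalization and the claims $u_i,v_i\sim\chi_1^2$ must be read conditionally on $t^*$ --- equivalently, one carries out the argument above for each fixed $t\in\mathcal C_n$ and then substitutes $t=t^*$. This conditional statement is in fact precisely the form needed afterward, since the threshold $\mathfrak{R}_{n,\delta}$ is obtained by controlling the tails of $\sum_i v_i-\sum_i\lambda_i u_i$ for fixed $t$ and then taking a union bound over the finite set $\mathcal C_n$. I would also note explicitly that $v_i$ and $u_i$ are not claimed to be independent --- they are quadratic functions of the same Gaussian vector linked through $Q$ --- only their marginal $\chi_1^2$ laws are asserted, which is all the subsequent concentration arguments use.
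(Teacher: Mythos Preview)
Your proposal is correct and follows essentially the same route as the paper: whiten by $Z=\Sigma^{-1/2}X$, then diagonalize $M=\Sigma^{1/2}(\Sigma_{t^*}')^{-1}\Sigma^{1/2}=Q\Lambda Q^T$ and read off $\sum_i v_i-\sum_i\lambda_i u_i$ with $v_i=Z_i^2$ and $u_i=(Q^TZ)_i^2$. Your added discussion of the data-dependence of $t^*$ and the non-independence of the $u_i,v_i$ is sound and in fact more careful than the paper, which performs the computation for fixed $t^*$ without commenting on these points.
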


\begin{theorem}\label{thm:alter_distribution_case}
Let $t^*=\text{argmin}_{t}\left[X^T(\Sigma_{t}^{'})^{-1}X+\ln|\Sigma_{t}^{'}|\right]$. Under the alternative hypothesis, i.e., $X \sim N(\boldsymbol{0}, {\Sigma_{t^*}^{'}})$,
\begin{align*}
2\mathfrak{L} - \ln\left(\frac{|\Sigma|}{|\Sigma_{t^*}^{'}|}\right) = \sum_{i=1}^{n}\lambda_iu_i-\sum_{i=1}^{n}v_i
\end{align*}
where $\lambda_1,...,\lambda_n$ are the eigenvalues of $\Sigma{'}_{t^*}^{\frac{1}{2}}\Sigma^{-1}\Sigma{'}_{t^*}^{\frac{1}{2}}$ and $u_i,v_i\sim\chi_1^2$, a chi-squared distribution with degree 1.
\end{theorem}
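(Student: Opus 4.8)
The plan is to mirror the argument for Theorem~\ref{thm:null_distribution_case}, with the roles of $\Sigma$ and $\Sigma_{t^*}'$ interchanged. First I would note that in the bracket of Equation~(\ref{eq:covlr}) the quantity $X^T\Sigma^{-1}X+\ln|\Sigma|$ is independent of $t$, so maximizing the bracket over $t\in\mathcal{C}_n$ is equivalent to minimizing $X^T(\Sigma_t')^{-1}X+\ln|\Sigma_t'|$; the maximizer is therefore exactly $t^*$, which gives the identity
\[
2\mathfrak{L}-\ln\!\left(\frac{|\Sigma|}{|\Sigma_{t^*}'|}\right)=X^T\Sigma^{-1}X-X^T(\Sigma_{t^*}')^{-1}X .
\]
This reduces the theorem to analyzing a difference of two quadratic forms in the Gaussian vector $X$.

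Next I would whiten $X$ with respect to the covariance it actually has under the alternative. Since $X\sim N(\boldsymbol{0},\Sigma_{t^*}')$ and $\Sigma_{t^*}'$ is positive definite, write $X=(\Sigma_{t^*}')^{1/2}Z$ with $Z\sim N(\boldsymbol{0},I_n)$. Then $X^T(\Sigma_{t^*}')^{-1}X=Z^TZ$, and $X^T\Sigma^{-1}X=Z^TNZ$ where $N:=(\Sigma_{t^*}')^{1/2}\Sigma^{-1}(\Sigma_{t^*}')^{1/2}$ is symmetric positive definite with eigenvalues exactly the $\lambda_1,\dots,\lambda_n$ of the statement. Writing the spectral decomposition $N=Q\,\mathrm{diag}(\lambda_1,\dots,\lambda_n)\,Q^T$ with $Q$ orthogonal and setting $W:=Q^TZ$, orthogonal invariance of the isotropic Gaussian gives $W\sim N(\boldsymbol{0},I_n)$, hence $Z^TZ=\sum_i W_i^2$ and $Z^TNZ=\sum_i\lambda_i W_i^2$. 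Taking $u_i=v_i=W_i^2\sim\chi_1^2$ yields
\[
2\mathfrak{L}-\ln\!\left(\frac{|\Sigma|}{|\Sigma_{t^*}'|}\right)=\sum_{i=1}^n\lambda_i u_i-\sum_{i=1}^n v_i ,
\]
as claimed.

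The quadratic-form algebra and the orthogonal change of variables are routine; the delicate point, and the one I would be most careful about, is that $t^*$ is itself a function of $X$, so the eigenvalues $\lambda_i$ and the variables $u_i,v_i$ are data-dependent and not independent of each other. Accordingly the identity should be understood pathwise --- equivalently, conditionally on each event $\{t^*=t\}$, $t\in\mathcal{C}_n$, on which $\Sigma_{t^*}'$ and therefore all the $\lambda_i$ are deterministic. I would make this conditioning explicit, so that the representation can later be combined with a tail bound on each such event and a union bound over $\mathcal{C}_n$, in the same spirit as the single-CP analysis recalled in the Background section.
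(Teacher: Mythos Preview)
Your proposal is correct and follows essentially the same route as the paper: whiten $X$ by $(\Sigma_{t^*}')^{1/2}$, then orthogonally diagonalize the symmetric matrix $(\Sigma_{t^*}')^{1/2}\Sigma^{-1}(\Sigma_{t^*}')^{1/2}$. Two small points where you are actually sharper than the paper: (i) you observe that one may take $u_i=v_i=W_i^2$, whereas the paper keeps $v_i=(X'_i)^2$ and $u_i=(X''_i)^2$ as distinct (though orthogonally related) variables; and (ii) your explicit caveat that $t^*$ is data-dependent, so the representation is to be read conditionally on $\{t^*=t\}$, addresses a subtlety the paper's proof leaves implicit.
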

Theorems \ref{thm:null_distribution_case} and \ref{thm:alter_distribution_case} show that the difference of two positive semi-definite quadratic terms can be expressed as a subtraction between a chi-square random variable with $n$ degrees of freedom and a linear combination of independent chi-square random variables having one degree of freedom each. See Appendix\footnote{Appendix is available at \url{https://arxiv.org/abs/1905.13168}} \ref{sec:proof_sec3} for the proofs of Lemmas and Theorems in Sections \ref{sec:3} and \ref{sec:4}.

\tikzstyle{block} = [fill=gray!50, rectangle,fill opacity=0.3,
    minimum height=0.8cm, minimum width=2.8cm]
\tikzstyle{hidden} = [fill=white!20!white, rectangle, 
    minimum height=1.5cm, minimum width=0.1mm]    
\tikzstyle{sum} = [draw, fill=blue!20, circle]
\tikzstyle{input} = [coordinate]
\tikzstyle{output} = [coordinate]
\tikzstyle{pinstyle} = [pin edge={thin,white}]
\subsection{Tests for the Covariance Structural Break}\label{sec:case2}
For the case where the covariance structure breaks into two different kernels, $\mathbb{H}_0$ and $\mathbb{H}_1$ are similarly defined as in Section \ref{sec:3.2} except that $K''(i, j)=0$. The corresponding covariance matrices can then be written as 
\begin{equation*}
\Sigma = 
\begin{pmatrix}
K_{aa} & K_{ab} \\
K_{ba} & K_{bb} 
\end{pmatrix}, 
\Sigma_t^{'}=
\begin{pmatrix}
K_{aa} & 0 \\
0 & K'_{bb}
\end{pmatrix}.
\end{equation*}
Here, $K_{rc}$ for $r,c\in\{a,b\}$ indicates the covariance matrix between $X_r$ and $X_c$ with kernel $K$ where $X_a:=X_{1:t}$ and $X_b:=X_{t+1:n}$. 
We now define a likelihood ratio test as $\mathfrak{T}_{GLRT} = \mathbb{I}\left( 2\mathfrak{L} \ge \mathfrak{R}_{\delta} \right)$.
For further Lemmas and Theorems, we define constant $C_{t}$ as follows.
\begin{definition}\label{def:c1c2}
For the covariance matrices $\Sigma$ and $\Sigma_t^{'}$, 
$C_t=\frac{1}{\lambda_{\text{min}}(\Sigma)}+\frac{1}{\lambda_{\text{min}}(\Sigma_{t}')}$ where $\lambda_{\text{min}}(M)$ represents the smallest eigenvalue of matrix $M$.  
\end{definition}
\begin{lemma}\label{lemma:c01c02}
$C_{t} \le C_{0}$ for all $t\in[1,n]$ when $C_{0}=\frac{1}{\lambda_{\text{min}}(\Sigma)}+\frac{1}{\lambda_{\text{min}}(\Sigma) \wedge \lambda_{\text{min}}(\Sigma_{t=n}')}$ where $\wedge$ stands for the minimum operator. 
\end{lemma}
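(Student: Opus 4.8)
\emph{Proof plan.} The plan is to convert the claimed inequality between the constants $C_t$ and $C_0$ into a single inequality between smallest eigenvalues, and then obtain that eigenvalue inequality from two elementary facts: the block-diagonal structure of $\Sigma_t'$ and Cauchy's eigenvalue interlacing theorem for principal submatrices. Since $C_t$ and $C_0$ share the common summand $\frac{1}{\lambda_{\text{min}}(\Sigma)}$, and all the matrices involved are positive definite (so $x\mapsto 1/x$ is decreasing on $(0,\infty)$), the statement $C_t\le C_0$ is equivalent to
\[
\lambda_{\text{min}}(\Sigma_t')\ \ge\ \lambda_{\text{min}}(\Sigma)\wedge\lambda_{\text{min}}(\Sigma_{t=n}')\qquad\text{for every }t\in[1,n],
\]
so it suffices to bound $\lambda_{\text{min}}(\Sigma_t')$ from below, uniformly in $t$, by the right-hand side.

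First I would use that, because $K''\equiv 0$ in the structural-break setting, $\Sigma_t'$ is block diagonal, with one block built from $K$ and the other from $K'$ over complementary index windows; hence its spectrum is the union of the two blocks' spectra and $\lambda_{\text{min}}(\Sigma_t')$ is simply the smaller of the two blocks' smallest eigenvalues. It is therefore enough to bound each block separately. The $K$-block of $\Sigma_t'$ is a principal submatrix of the full $K$-covariance $\Sigma$ (same kernel, a subset of the same index set), so Cauchy interlacing gives that its smallest eigenvalue is at least $\lambda_{\text{min}}(\Sigma)$. The $K'$-block is a principal submatrix of the $K'$-covariance over the widest possible window, which is exactly the corresponding block of the extremal matrix $\Sigma_{t=n}'$; interlacing again gives that its smallest eigenvalue is at least $\lambda_{\text{min}}(\Sigma_{t=n}')$. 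Taking the minimum of the two lower bounds yields $\lambda_{\text{min}}(\Sigma_t')\ge\lambda_{\text{min}}(\Sigma)\wedge\lambda_{\text{min}}(\Sigma_{t=n}')$, and reading the displayed equivalence backwards closes the argument.

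The part that needs care is the bookkeeping: one must pair each diagonal block of $\Sigma_t'$ with the correct reference matrix and verify that the extremal split (the one the statement indexes by $t=n$) really does apply the $K'$ regime to the largest index window, so that every $K'$-block that can occur is a genuine principal submatrix of the $K'$-block of $\Sigma_{t=n}'$. The degenerate endpoints, where one of the two windows is empty or a single index, should be checked but are immediate, since interlacing is vacuous for the empty submatrix and trivial for a $1\times 1$ one. Once this is pinned down the rest is just one invocation of the interlacing theorem per block together with the monotonicity of $x\mapsto 1/x$; no probabilistic input is required, which is what lets the lemma be quoted later as a deterministic, $t$-independent bound (e.g., when selecting a single threshold $\mathfrak{R}_\delta$).
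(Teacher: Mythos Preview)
Your proposal is correct and follows essentially the same route as the paper: the paper's one-line proof invokes Corollary~\ref{ineq_eigv_submat}, which is precisely the block-diagonal-plus-Cauchy-interlacing bound you spell out, yielding $\lambda_{\min}(\Sigma_t')\ge\lambda_{\min}(A)\wedge\lambda_{\min}(B)$ with $A=\Sigma$ and $B$ the full $K'$-matrix. The only nuance is the identification of the extremal reference matrix---the paper's corollary works with the full $n\times n$ $K'$-matrix $B$ directly rather than via the $K'$-block of $\Sigma'_{t=n}$---but this is exactly the bookkeeping you already flag as needing care, and the argument is otherwise identical.
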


\begin{lemma}\label{quad_subgaussian}
Let $Q_t$ be $X^{T}((\Sigma)^{-1}- (\Sigma_{t}')^{-1})X$. When $X_t$ is bounded with $X_t \in [-V, V]$ for all $t\in[1,n]$, $Q_t - \mathbb{E}[Q_t]$ is $\frac{C_{t}V^{2}n}{2}$-subgaussian. 
\end{lemma}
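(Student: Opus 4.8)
The plan is to write $Q_{t}=X^{T}AX$ with the symmetric matrix $A:=\Sigma^{-1}-(\Sigma_{t}')^{-1}$ and to show that the centered quadratic form $Q_{t}-\mathbb{E}[Q_{t}]$ concentrates, using a two-sided spectral localization of $A$ together with the coordinate boundedness of $X$.

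The first step is to control the spectrum of $A$. Since $\Sigma,\Sigma_{t}'\succ 0$, we have $0\prec\Sigma^{-1}\preceq\lambda_{\text{min}}(\Sigma)^{-1}I$ and $0\prec(\Sigma_{t}')^{-1}\preceq\lambda_{\text{min}}(\Sigma_{t}')^{-1}I$, so every eigenvalue of $A$ lies in $[-\lambda_{\text{min}}(\Sigma_{t}')^{-1},\,\lambda_{\text{min}}(\Sigma)^{-1}]$; in particular $\|A\|_{\mathrm{op}}\le C_{t}$. Diagonalising $A=\sum_{k}\mu_{k}w_{k}w_{k}^{T}$ and using Parseval, $Q_{t}=\sum_{k}\mu_{k}(w_{k}^{T}X)^{2}$ with $\sum_{k}(w_{k}^{T}X)^{2}=\|X\|^{2}=\sum_{i}X_{i}^{2}\le nV^{2}$, which is the only place $X_{i}\in[-V,V]$ is used; grouping the positive and negative $\mu_{k}$ then gives $-nV^{2}/\lambda_{\text{min}}(\Sigma_{t}')\le Q_{t}\le nV^{2}/\lambda_{\text{min}}(\Sigma)$, so $Q_{t}$ takes values in an interval of length $C_{t}nV^{2}$ and Hoeffding's lemma already makes $Q_{t}-\mathbb{E}[Q_{t}]$ subgaussian.

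To sharpen the proxy to the stated $\tfrac{1}{2}C_{t}V^{2}n$, I would replace the range argument by a bounded-differences / Doob-martingale estimate (equivalently, a Hanson-Wright-type bound for a quadratic form in bounded coordinates), revealing $X_{1},\dots,X_{n}$ one at a time. Splitting $Q_{t}=X^{T}\Sigma^{-1}X-X^{T}(\Sigma_{t}')^{-1}X$ into two nonnegative quadratic forms, the $i$-th martingale increment is the oscillation over $X_{i}\in[-V,V]$ of a single parabola whose leading coefficient is $(\Sigma^{-1})_{ii}-((\Sigma_{t}')^{-1})_{ii}\in[-C_{t},C_{t}]$ and whose linear term is an $AX$-type entry built from the already-fixed and averaged coordinates. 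The crucial point is to bound these increments \emph{in aggregate} via $\sum_{i}(AX)_{i}^{2}=\|AX\|^{2}\le\|A\|_{\mathrm{op}}^{2}\|X\|^{2}$ rather than coordinate by coordinate, so that $\sum_{i}c_{i}^{2}\le 2C_{t}V^{2}n$ and Azuma-Hoeffding yields the claimed proxy; Lemma~\ref{lemma:c01c02} then replaces $C_{t}$ by the $t$-independent $C_{0}$, which is what the subsequent union bound over $t\in\mathcal{C}_{n}$ needs.

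The main obstacle is exactly this constant bookkeeping: turning the per-coordinate influence bounds into the clean $\sum_{i}c_{i}^{2}\le 2C_{t}V^{2}n$ without accruing a spurious factor of $n$ or of $\|A\|_{\mathrm{op}}$ requires using $\|AX\|\le\|A\|_{\mathrm{op}}\|X\|$ globally and handling the cross terms between the two quadratic forms so that the two one-sided spectral bounds combine into exactly $C_{t}$ (rather than $2C_{t}$ or $C_{t}^{2}$). A secondary subtlety is the ``reveal one coordinate at a time'' step: under the null the $X_{i}$ are correlated, so one either whitens $X=\Sigma^{1/2}Z$ first and restates boundedness accordingly, or argues, as the lemma is phrased, conditionally on a bounded realization of $X$.
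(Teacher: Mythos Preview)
Your second paragraph is already the paper's proof, and it already delivers the stated constant; the rest of your proposal is unnecessary. Hoeffding's lemma (the paper records it as Lemma~\ref{bdd_subgaussian}: a centered random variable supported in $[a,b]$ is $\tfrac{b-a}{2}$-subgaussian) applied to the two-sided range you derived,
\[
-\frac{V^{2}n}{\lambda_{\min}(\Sigma_{t}')}\;\le\;Q_{t}\;\le\;\frac{V^{2}n}{\lambda_{\min}(\Sigma)},
\]
gives an interval of length $b-a=C_{t}V^{2}n$ and hence the subgaussian proxy $\tfrac{b-a}{2}=\tfrac{C_{t}V^{2}n}{2}$ exactly. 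There is nothing to ``sharpen'': the paper's argument is precisely your range bound followed by this one-line invocation of Hoeffding's lemma.

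Your third and fourth paragraphs therefore chase a phantom gap. The Doob martingale / bounded-differences route you sketch is not needed and, as you yourself note, it runs into genuine trouble because the coordinates of $X$ are correlated under both hypotheses, so revealing them one at a time does not produce independent increments without first whitening (which destroys the coordinate-wise bound $|X_i|\le V$). The paper sidesteps all of this by treating $Q_{t}$ as a single bounded scalar random variable.
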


Lemma \ref{quad_subgaussian} shows that, for bounded $X_t$, the tail probability of the centered quadratic term in the likelihood ratio decays approximately as fast as the Gaussian with zero mean and the same variance as the centered quadratic term.

\begin{lemma}\label{testnull2}
The probability that the $\mathfrak{T}_{GLRT}$ is correct under the null hypothesis (absence of CP) is at least $1-\delta/2$, i.e., $\mathbb{P}(2\mathfrak{L}\ge \mathfrak{R}_{n,\delta,\mathbb{H}_0}|\mathbb{H}_0) \le \delta /2,$
for $\mathfrak{R}_{n,\delta,\mathbb{H}_0} = \max_t \left(n - Tr(\Sigma(\Sigma_{t}^{'})^{-1})+\ln\left(\frac{|\Sigma|}{|\Sigma_{t}^{'}|}\right)\right) + C_{0}V^2n\;\sqrt[]{0.5\ln(2/\delta)}$.
\end{lemma}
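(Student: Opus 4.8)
The plan is to separate the deterministic part of the test statistic from its random part, center the random part under the null, and then invoke the subgaussian tail bound of Lemma~\ref{quad_subgaussian}. First I would rewrite the likelihood ratio in Equation~(\ref{eq:covlr}) as $2\mathfrak{L} = \max_{t\in\mathcal{C}_n}\bigl( Q_t + \ln(|\Sigma|/|\Sigma_t'|)\bigr)$ with $Q_t = X^T\bigl((\Sigma)^{-1}-(\Sigma_t')^{-1}\bigr)X$, which is exactly the quadratic form treated in Lemma~\ref{quad_subgaussian}. Since the log-determinant terms are deterministic, all the randomness lives in the $Q_t$.

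Next, under $\mathbb{H}_0$ we have $X\sim N(\boldsymbol 0,\Sigma)$, so $\mathbb{E}[Q_t] = Tr(\Sigma^{-1}\Sigma) - Tr((\Sigma_t')^{-1}\Sigma) = n - Tr(\Sigma(\Sigma_t')^{-1})$ by the identity $\mathbb{E}[X^TAX]=Tr(A\Sigma)$. Hence the first term of $\mathfrak{R}_{n,\delta,\mathbb{H}_0}$ is precisely $\max_t\bigl(\mathbb{E}[Q_t] + \ln(|\Sigma|/|\Sigma_t'|)\bigr)$, so for each fixed $t$ the event $\{Q_t + \ln(|\Sigma|/|\Sigma_t'|)\ge\mathfrak{R}_{n,\delta,\mathbb{H}_0}\}$ is contained in $\{Q_t - \mathbb{E}[Q_t] \ge C_0 V^2 n\sqrt{0.5\ln(2/\delta)}\}$. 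By Lemma~\ref{quad_subgaussian}, $Q_t-\mathbb{E}[Q_t]$ is $\frac{C_tV^2n}{2}$-subgaussian, and $C_t\le C_0$ by Lemma~\ref{lemma:c01c02}, so it is also $\frac{C_0V^2n}{2}$-subgaussian; the standard tail bound $\mathbb{P}(Z\ge s)\le\exp\bigl(-2s^2/(C_0V^2n)^2\bigr)$ evaluated at $s=C_0V^2n\sqrt{0.5\ln(2/\delta)}$ gives exactly $\delta/2$. Finally, $2\mathfrak{L}\ge\mathfrak{R}_{n,\delta,\mathbb{H}_0}$ forces $Q_t+\ln(|\Sigma|/|\Sigma_t'|)\ge\mathfrak{R}_{n,\delta,\mathbb{H}_0}$ for some $t\in\mathcal{C}_n$, and a union bound over the candidate change points completes the argument.

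The step I expect to be the crux is passing from the per-$t$ bound to the maximum over $\mathcal{C}_n$: a union bound inflates the failure probability by $|\mathcal{C}_n|$, so to land on the clean $\delta/2$ one must either test only a sparse set of candidates or absorb a $\log|\mathcal{C}_n|$ overhead into the threshold, in the spirit of the $\log(2n/\delta)$ already present in the mean-change threshold $\mathfrak{R}_{n,\delta}$. Everything else is routine: the null mean is a one-line trace computation, and the concentration is handed to us by Lemma~\ref{quad_subgaussian}. One small point I would make explicit is that Lemma~\ref{quad_subgaussian} requires $X_t\in[-V,V]$, which is compatible with the Gaussian null only through a truncation argument, so I would either state the boundedness as a standing assumption on the observed range or incur a negligible correction for the truncation tail.
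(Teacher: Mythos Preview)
Your argument is essentially the paper's: set $Z_t=Q_t+\ln(|\Sigma|/|\Sigma_t'|)$, compute $\mathbb{E}[Z_t\mid\mathbb{H}_0]=n-Tr(\Sigma(\Sigma_t')^{-1})+\ln(|\Sigma|/|\Sigma_t'|)$ via the trace identity, invoke Lemma~\ref{quad_subgaussian} for the $\frac{C_tV^2n}{2}$-subgaussian tail, replace $C_t$ by $C_0$ through Lemma~\ref{lemma:c01c02}, and then pass to the maximum over $t$.

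The point you single out as the crux is exactly where your write-up is more careful than the paper's. The paper first obtains the per-$t$ bound at level $\delta/(2n)$ with deviation $C_tV^2n\sqrt{0.5\ln(2n/\delta)}$, and its final displayed inequality for $\max_t Z_t$ indeed carries $\sqrt{0.5\ln(2n/\delta)}$, not the $\sqrt{0.5\ln(2/\delta)}$ appearing in the lemma statement; so the $\log|\mathcal{C}_n|$ overhead you anticipated is in fact present in the proof, and the statement/proof are inconsistent on this constant. Moreover, the paper's verbal justification for the passage to the maximum---``the inequality holds for all $t$, hence for $t^*=\arg\max_t Z_t$''---is not a valid substitute for a union bound since $t^*$ is random; your explicit union bound is the correct way to close the step. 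Your side remark that the boundedness hypothesis $X_t\in[-V,V]$ of Lemma~\ref{quad_subgaussian} sits uneasily with the Gaussian null is also on target; the paper simply adopts boundedness as a standing assumption and does not address the tension.
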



\begin{lemma}\label{testalt2}
The probability that the $\mathfrak{T}_{GLRT}$ is correct under the alternative hypothesis (existence of a CP) is at least $1-\delta/2$, i.e., $\mathbb{P}(2\mathfrak{L}\le \mathfrak{R}_{n,\delta,\mathbb{H}_1}|\mathbb{H}_1) \le \delta /2,$
for $\mathfrak{R}_{n,\delta,\mathbb{H}_1} = \min_t \left(Tr(\Sigma_{t}^{'}(\Sigma)^{-1}) - n +\ln\left(\frac{|\Sigma|}{|\Sigma_{t}^{'}|}\right)\right) - C_{0}V^2n\;\sqrt[]{0.5\ln(2/\delta)}$.
\end{lemma}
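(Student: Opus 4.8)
The plan is to view this as the mirror image of Lemma~\ref{testnull2}, noting that here only a \emph{lower} bound on $2\mathfrak{L}$ is required, which is the easy direction: since $2\mathfrak{L}$ in Equation~(\ref{eq:covlr}) is a maximum over candidate change points, I can simply evaluate the bracketed quantity at the true change point $t^*$ (the one for which $X\sim N(\boldsymbol{0},\Sigma_{t^*}')$ under $\mathbb{H}_1$) to obtain
\[
2\mathfrak{L}\;\ge\;X^T\Sigma^{-1}X-X^T(\Sigma_{t^*}')^{-1}X+\ln\!\left(\frac{|\Sigma|}{|\Sigma_{t^*}'|}\right)\;=\;Q_{t^*}+\ln\!\left(\frac{|\Sigma|}{|\Sigma_{t^*}'|}\right),
\]
with $Q_t$ as in Lemma~\ref{quad_subgaussian}. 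It then suffices to lower-bound $Q_{t^*}$ with probability at least $1-\delta/2$ and finally relax the resulting $t^*$-dependent expression to a minimum over all $t$, which can only decrease it.

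First I would compute $\mathbb{E}[Q_{t^*}]$ under $\mathbb{H}_1$. Using $\mathbb{E}[X^TMX]=\text{Tr}\!\left(M\,\mathrm{Cov}(X)\right)$ with $\mathrm{Cov}(X)=\Sigma_{t^*}'$ and zero mean, we get $\mathbb{E}[X^T\Sigma^{-1}X]=\text{Tr}(\Sigma^{-1}\Sigma_{t^*}')=\text{Tr}(\Sigma_{t^*}'\Sigma^{-1})$ and $\mathbb{E}[X^T(\Sigma_{t^*}')^{-1}X]=\text{Tr}(I_n)=n$, hence $\mathbb{E}[Q_{t^*}]=\text{Tr}(\Sigma_{t^*}'\Sigma^{-1})-n$. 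For the concentration step, under the boundedness assumption of Lemma~\ref{quad_subgaussian} the variable $Q_{t^*}-\mathbb{E}[Q_{t^*}]$ is $\tfrac{C_{t^*}V^2n}{2}$-subgaussian, so its lower tail obeys $\mathbb{P}\!\left(Q_{t^*}-\mathbb{E}[Q_{t^*}]\le -s\right)\le\exp\!\left(-2s^2/(C_{t^*}V^2n)^2\right)$. Setting the right-hand side equal to $\delta/2$ gives $s=C_{t^*}V^2n\sqrt{0.5\ln(2/\delta)}$, and since $C_{t^*}\le C_0$ by Lemma~\ref{lemma:c01c02} the bound persists with $C_{t^*}$ replaced by $C_0$. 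Therefore, with probability at least $1-\delta/2$,
\[
2\mathfrak{L}\;\ge\;\text{Tr}(\Sigma_{t^*}'\Sigma^{-1})-n+\ln\!\left(\frac{|\Sigma|}{|\Sigma_{t^*}'|}\right)-C_0V^2n\sqrt{0.5\ln(2/\delta)}\;\ge\;\mathfrak{R}_{n,\delta,\mathbb{H}_1},
\]
the last inequality because replacing the $t^*$-term with $\min_t$ only lowers it. Equivalently $\mathbb{P}(2\mathfrak{L}\le\mathfrak{R}_{n,\delta,\mathbb{H}_1}\mid\mathbb{H}_1)\le\delta/2$.

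The only delicate points are bookkeeping ones. The threshold must not reference the unknown true change point, which is why it carries $\min_t$ instead of the $t^*$-term and the worst-case constant $C_0$ instead of $C_{t^*}$; Lemma~\ref{lemma:c01c02} together with monotonicity of $\min$ are precisely what make these relaxations harmless. One should also check the direction of the subgaussian tail (we need the \emph{left} tail of $Q_{t^*}$, since we want $2\mathfrak{L}$ large) and the arithmetic $\exp(-2s^2/(C_0V^2n)^2)=\delta/2\Rightarrow s=C_0V^2n\sqrt{0.5\ln(2/\delta)}$, which reproduces the stated constant. Unlike Lemma~\ref{testnull2}, no union bound over $t$ is needed, because evaluating the maximum at the single point $t^*$ already yields the required inequality; for this reason I expect this lemma to be the easier of the pair, with essentially no obstacle beyond correctly identifying $\mathbb{E}[Q_{t^*}]$ under $\mathbb{H}_1$ and tracking the constant.
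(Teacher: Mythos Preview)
Your proof is correct and follows essentially the same route as the paper's: compute $\mathbb{E}[Z_{t^*}\mid\mathbb{H}_{1,t^*}]=\mathrm{Tr}(\Sigma_{t^*}'\Sigma^{-1})-n+\ln(|\Sigma|/|\Sigma_{t^*}'|)$, apply the sub-Gaussian lower-tail bound from Lemma~\ref{quad_subgaussian} together with Lemma~\ref{lemma:c01c02} to pass from $C_{t^*}$ to $C_0$, relax the $t^*$-expression to $\min_t$, and use $2\mathfrak{L}=\max_t Z_t\ge Z_{t^*}$. Your exposition is if anything a bit cleaner than the paper's, since you explicitly take $t^*$ to be the \emph{true} change point (so that the absence of a union bound is transparent), whereas the paper writes $t^*=\operatorname{argmax}_t Z_t$ in the final step; the underlying argument is the same.
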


Using Lemma \ref{quad_subgaussian} and the concentration inequality, Lemmas \ref{testnull2} and \ref{testalt2} show that we can control the type \rom{1} or type \rom{2} errors to be below $\delta / 2$. 

 

\begin{theorem}\label{thm:generalchange}
For $\mathfrak{R}_{n,\delta,\mathbb{H}_0}$, $\mathfrak{R}_{n,\delta,\mathbb{H}_1}$ in Lemmas \ref{testnull2} and \ref{testalt2}, when $\mathfrak{R}_{n,\delta,\mathbb{H}_1} \geq \mathfrak{R}_{n,\delta,\mathbb{H}_0}$ and $\mathfrak{R}_{n,\delta,\mathbb{H}_0}\le \mathfrak{R}_{\delta}\le\mathfrak{R}_{n,\delta,\mathbb{H}_1}$,
the conditional detection error probability is bounded as 
$$
\varphi_n(\mathfrak{T})=\mathbb{P}(2\mathfrak{L}\ge \mathfrak{R}_{\delta}|\mathbb{H}_0)+\max_{t\in \mathcal{C}_n}\mathbb{P}(2\mathfrak{L}\le \mathfrak{R}_{\delta}|\mathbb{H}_{1,t}) \le \delta.
$$

\end{theorem}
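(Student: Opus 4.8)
The plan is to split $\varphi_n(\mathfrak{T})$ into its two summands — the type~\rom{1} term $\mathbb{P}(2\mathfrak{L}\ge\mathfrak{R}_{\delta}|\mathbb{H}_0)$ and the type~\rom{2} term $\max_{t\in\mathcal{C}_n}\mathbb{P}(2\mathfrak{L}\le\mathfrak{R}_{\delta}|\mathbb{H}_{1,t})$ — and bound each one by $\delta/2$ using the calibrated thresholds of Lemmas~\ref{testnull2} and~\ref{testalt2}. The whole argument rests on one elementary observation: the events $\{2\mathfrak{L}\ge r\}$ are nested decreasing in $r$, while $\{2\mathfrak{L}\le r\}$ are nested increasing in $r$, so monotonicity of probability lets me replace $\mathfrak{R}_{\delta}$ by whichever of the two extreme thresholds the hypotheses of the theorem permit.

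For the first term, since $\mathfrak{R}_{\delta}\ge\mathfrak{R}_{n,\delta,\mathbb{H}_0}$ we have $\{2\mathfrak{L}\ge\mathfrak{R}_{\delta}\}\subseteq\{2\mathfrak{L}\ge\mathfrak{R}_{n,\delta,\mathbb{H}_0}\}$, so $\mathbb{P}(2\mathfrak{L}\ge\mathfrak{R}_{\delta}|\mathbb{H}_0)\le\mathbb{P}(2\mathfrak{L}\ge\mathfrak{R}_{n,\delta,\mathbb{H}_0}|\mathbb{H}_0)\le\delta/2$ by Lemma~\ref{testnull2}. Symmetrically, fix any $t\in\mathcal{C}_n$; since $\mathfrak{R}_{\delta}\le\mathfrak{R}_{n,\delta,\mathbb{H}_1}$ we get $\{2\mathfrak{L}\le\mathfrak{R}_{\delta}\}\subseteq\{2\mathfrak{L}\le\mathfrak{R}_{n,\delta,\mathbb{H}_1}\}$, hence $\mathbb{P}(2\mathfrak{L}\le\mathfrak{R}_{\delta}|\mathbb{H}_{1,t})\le\mathbb{P}(2\mathfrak{L}\le\mathfrak{R}_{n,\delta,\mathbb{H}_1}|\mathbb{H}_{1,t})\le\delta/2$ by Lemma~\ref{testalt2}. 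Taking the maximum over $t$ preserves the bound $\delta/2$, and adding the two pieces gives $\varphi_n(\mathfrak{T})\le\delta/2+\delta/2=\delta$. The remaining hypothesis $\mathfrak{R}_{n,\delta,\mathbb{H}_1}\ge\mathfrak{R}_{n,\delta,\mathbb{H}_0}$ plays no role in the estimate itself; it only guarantees that the admissible interval $[\mathfrak{R}_{n,\delta,\mathbb{H}_0},\mathfrak{R}_{n,\delta,\mathbb{H}_1}]$ for $\mathfrak{R}_{\delta}$ is nonempty, so the statement is not vacuous.

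The one point that needs care — and where I would be most careful in writing the full argument — is the passage from Lemma~\ref{testalt2}, which controls the mixed quantity $\mathbb{P}(2\mathfrak{L}\le\mathfrak{R}_{n,\delta,\mathbb{H}_1}|\mathbb{H}_1)$, to the per-$t$ statement $\mathbb{P}(2\mathfrak{L}\le\mathfrak{R}_{n,\delta,\mathbb{H}_1}|\mathbb{H}_{1,t})\le\delta/2$ demanded by the definition of $\varphi_n$. This is precisely why $\mathfrak{R}_{n,\delta,\mathbb{H}_1}$ is defined with a $\min$ over $t$: under $\mathbb{H}_{1,t}$ one has $2\mathfrak{L}\ge X^T\Sigma^{-1}X-X^T(\Sigma_t')^{-1}X+\ln(|\Sigma|/|\Sigma_t'|)$, a quantity with mean $Tr(\Sigma_t'\Sigma^{-1})-n+\ln(|\Sigma|/|\Sigma_t'|)$; applying the sub-Gaussian concentration of Lemma~\ref{quad_subgaussian} (with $C_t\le C_0$ from Lemma~\ref{lemma:c01c02}) to this single index and then taking $\min_t$ over the resulting lower thresholds produces a bound that holds simultaneously for every $t$, which is exactly what Lemma~\ref{testalt2} asserts. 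Likewise the $\max_t$ in $\mathfrak{R}_{n,\delta,\mathbb{H}_0}$ makes Lemma~\ref{testnull2} uniform in $t$ (here it also absorbs the $\max_t$ inside $2\mathfrak{L}$ itself). Once those two lemmas are invoked in this intended uniform-in-$t$ reading, the theorem follows from monotonicity and a union of the two $\delta/2$ error budgets, with no additional computation required.
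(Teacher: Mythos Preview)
Your proposal is correct and follows exactly the paper's approach: the paper's own proof is the single line ``It follows directly from Lemmas~\ref{testnull2} and~\ref{testalt2},'' and your argument simply makes explicit the monotonicity step and the per-$t$ reading of Lemma~\ref{testalt2} that this line leaves implicit. In particular, your careful remark that the $\min_t$ in $\mathfrak{R}_{n,\delta,\mathbb{H}_1}$ is what delivers the uniform-in-$t$ bound $\mathbb{P}(2\mathfrak{L}\le\mathfrak{R}_{n,\delta,\mathbb{H}_1}\mid\mathbb{H}_{1,t})\le\delta/2$ matches exactly how the appendix proof of Lemma~\ref{testalt2} is structured.
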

\begin{proof}
It follows directly from Lemmas \ref{testnull2} and \ref{testalt2}.
\end{proof}

Using Theorem \ref{thm:generalchange}, we guarantee that the likelihood ratio test for a general covariance kernel change is statistically correct for any error bound $\delta$ under specified conditions. If we set the threshold to be greater than or equal to the upper epsilon bound of the null distribution, $\mathfrak{R}_{n,\delta,\mathbb{H}_0}$, we can guarantee a bounded type \rom{1} error. If we set the threshold to be less than or equal to the lower epsilon bound of the alternative distribution, $\mathfrak{R}_{n,\delta,\mathbb{H}_1}$, we can guarantee a bounded type \rom{2} error. 

The inequalities for $\mathfrak{R}_{n,\delta,\mathbb{H}_0}$ and $\mathfrak{R}_{n,\delta,\mathbb{H}_1}$ can have three possible cases. If $\mathfrak{R}_{n,\delta,\mathbb{H}_0}>\mathfrak{R}_{n,\delta,\mathbb{H}_1}$, there is no threshold guaranteeing both type \rom{1} and type \rom{2} errors. If $\mathfrak{R}_{n,\delta,\mathbb{H}_0}=\mathfrak{R}_{n,\delta,\mathbb{H}_1}$, there is only one threshold that can guarantee both type \rom{1} and type \rom{2} errors. If $\mathfrak{R}_{n,\delta,\mathbb{H}_0}<\mathfrak{R}_{n,\delta,\mathbb{H}_1}$, the thresholds that can guarantee both type \rom{1} and type \rom{2} errors are indicated by the shaded area in Figure \ref{fig:onesidetest} in Appendix \ref{sec:proof_sec3}.


\section{Confirmatory Bayesian Online CPD} \label{sec:4}

\begin{figure*}[th!]
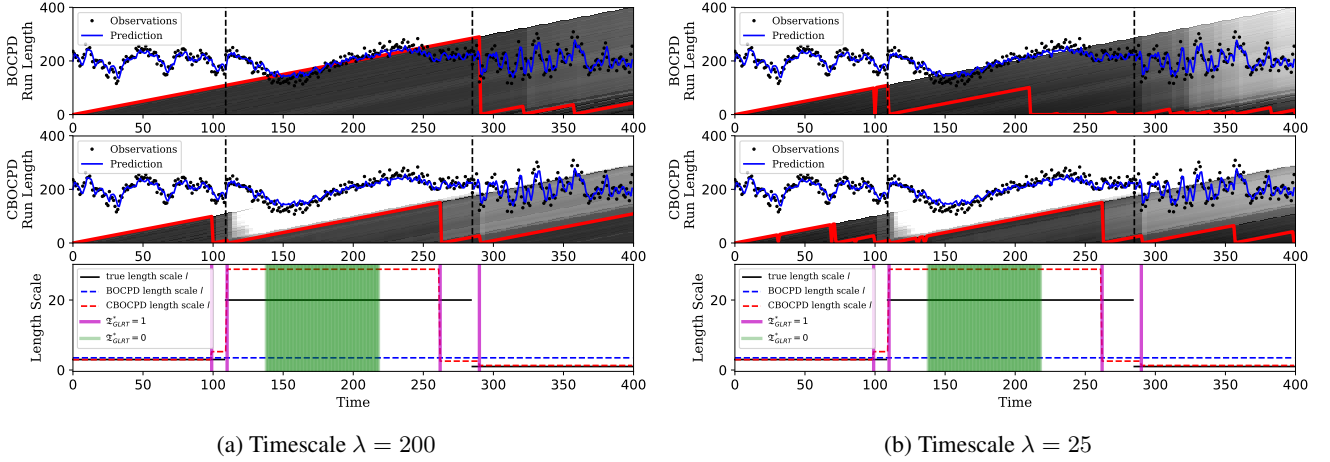

    \centering
      \begin{subfigure}[b]{0.49\textwidth}
        \includegraphics[width=\textwidth]{small_cp_no_std-3.pdf}
        \caption{Timescale $\lambda=200$}
        \label{fig:synthetic_len_change_big_timescale}
     \end{subfigure}
      \begin{subfigure}[b]{0.49\textwidth}
        \includegraphics[width=\textwidth]{many_cp_no_std-3.pdf}
        \caption{Timescale $\lambda=25$}
    	\label{fig:synthetic_len_change_small_timescale}
    \end{subfigure}
    \setlength{\belowcaptionskip}{-8pt}
    \caption{Synthetic data with changes in hyperparameters and the derived run-length distribution computed from BOCPD (top) and CBOCPD (middle). The bottom figure shows the hyperparameter values learned from each algorithm with the true value of the hyperparameters used to generate the synthetic data. Dashed black line indicates the true CPs.}
    \label{fig:synthetic_change}
\end{figure*}

This section presents our algorithmic contribution which improves the performance of Bayesian online CPD (BOCPD) with statistical hypothesis tests.
\subsection{Bayesian Online CPD}
Many real-world applications require CPs to be detected in an online manner.  
BOCPD \cite{BayesianOnlineChangepoint} uses Bayesian inference to update the distribution of future data based on the distribution of the run length. We assume that, if we divide data with CPs, the data in different partitions will be independent whereas the data in the same partition will be autocorrelated. 

The formulation of BOCPD is as follows. 
Let $r_t$ be the run length at time $t$ and $x_t$ be the data observation at time $t$. We denote a set of consecutive discrete observations between time $a$ and $b$ inclusive as $x_{a:b}$. Let ${x_t}^{(r_t)}$ denote the data observations since the most recent CP. We simply use ${x_t}^{(r)}$ instead of ${x_t}^{(r_t)}$ when there is no ambiguity. Then, we calculate the distribution of the next data $x_{t+1}$ given the data up to time $t$. Through marginalization of $r_t$, this can be written as
\begingroup
\small
\everymath{\scriptstyle}
\begin{align} 
\footnotesize
\mathbb{P}(x_{t{+}1}|x_{1:t})
 = \sum_{r_{t}}\mathbb{P}(x_{t{+}1}|x_{t}^{(r)})\mathbb{P}(r_{t}|x_{1:t}).
 \label{eq:bocpd1-4}
\end{align}
\normalsize



The joint distribution of the run length and data  $\mathbb{P}(r_{t},x_{1:t})$ is written as,
\begingroup
\small
\everymath{\scriptscriptstyle}
\begin{align}
 \sum_{r_{t-1}}\mathbb{P}(r_{t}|r_{t-1},x_{t-1}^{(r)})\mathbb{P}(x_{t}|r_{t-1},x_{t-1}^{(r)}) \mathbb{P}(r_{t-1},x_{1:t-1}).\label{prx4}
\end{align}
\normalsize
\endgroup


We can see that $\mathbb{P}(r_t,x_{1:t})$ has a recursive form with respect to time. The first term in Equation (\ref{prx4}) can be substituted by $\mathbb{P}(r_t|r_{t-1})$ under the assumption that the current run length only depends on the previous run length. Here, the conditional prior\footnote{$H(\tau)$ is the hazard function with $H(\tau)=\frac{\mathbb{P}_{gap}(g=\tau)}{\sum_{t=\tau}^{\infty}\mathbb{P}_{gap}(g=t)}.
$
} of $r_t$ is given by
\begin{equation}\label{eq:run_len_prob}
\mathbb{P}(r_{t}|r_{t-1}) = \begin{cases} H(r_{t-1}+1) &  r_{t} = 0\\1-H(r_{t-1}+1) &  r_{t}=r_{t-1}+1\\0 &  \text{otherwise.}\end{cases}\\[1.5mm]
\end{equation}

$\mathbb{P}_{gap} (g)$ is the a priori probability distribution over the interval between CPs. 
When $\mathbb{P}_{gap} (g)$ is a geometric distribution with timescale $\lambda$, the hazard function becomes constant as $H(\tau)=1/\lambda$. 
The second term in Equation (\ref{prx4}) can be calculated using a GP.
Thus, we can calculate the overall distribution $\mathbb{P}(x_{t+1} | x_{1:t})$ by recursive message passing scheme with $\mathbb{P}(r_t,x_{1:t})$.

The BOCPD framework efficiently determines changes in GP modeling. The change is modeled by considering all the possible run lengths. One of the main assumptions of this framework is that parameters such as the kernel parameters or the hazard function do not change. However, this online CPD framework is vulnerable to the parameters, because inappropriate parameters degrade predictive performance on non-stationary data leading imprecise run length distribution as shown in Figure \ref{fig:synthetic_change}.

\subsection{Confirmatory BOCPD}


\begin{algorithm}[tb]
   \caption{Confirmatory BOCPD }
   \label{alg:cbocpd}
\begin{algorithmic}[1]
	\STATE $m,\delta \leftarrow \text{half window size, error bound for likelihood ratio test}$
	\STATE $\mathbb{P}(x_0) \leftarrow \mathcal{N}(\mu_{prior},\sigma_{prior}^2)$  
	
    \FOR{ $t\in [1,T]$}
    \STATE $\Tilde{\mathfrak{R}}_{\delta,\mathbb{H}_0}, \Tilde{\mathfrak{R}}_{\delta,\mathbb{H}_1}\leftarrow \text{empirical thresholds for } \mathfrak{T}_{GLRT}^0,\mathfrak{T}_{GLRT}^1$
    \STATE $H \leftarrow H_{const}$
	\IF { $m<t<T-m$}
	    \STATE set window $W=x_{t-m:t+m}$
	    \STATE $2\mathfrak{L}_\tau \leftarrow \text{the likelihood ratio between } \mathbb{H}_0 \text{ and } \mathbb{H}_{1,\tau} \text{ with data }W$
	    \STATE $\tau^*, \; 2\mathfrak{L}=\text{argmax}_{\tau \in \mathcal{C}_W}2\mathfrak{L}_{\tau}, \; \text{max}_{\tau \in \mathcal{C}_W}2\mathfrak{L}_{\tau}$
	    \IF {$\mathfrak{T}_{GLRT}^1=1$ and $\mathfrak{T}_{GLRT}^0=1$ and $\tau^*=t$}
    	\STATE $H \leftarrow 1-\delta $
    	\ELSIF{$\mathfrak{T}_{GLRT}^1=0$ and $\mathfrak{T}_{GLRT}^0=0$}
    	\STATE $H \leftarrow \delta $
        \ENDIF
    \ENDIF
    
    \STATE $\pi_{t}^{(r)} \leftarrow \mathbb{P}(x_{t}|x_{t-1}^{(r)})$
    \STATE
    $\mathbb{P}(r_{t}=r_{t-1}{+}1,x_{1:t})\leftarrow$
$\mathbb{P}(r_{t-1},x_{1:t-1})\pi_{t}^{(r)}(1{-}H)$
    \STATE $\mathbb{P}(r_{t}=0,x_{1:{t-1}})\leftarrow \sum_{r_{t-1}}\mathbb{P}(r_{t-1},x_{1:t-1})\pi_t^{(r)}H$ 
    \STATE $\mathbb{P}(x_{1:{t}}) \leftarrow \sum_{r_{t}}\mathbb{P}(r_{t},x_{1:{t}})$
    \STATE $\mathbb{P}(r_{t}|x_{1:t}) \leftarrow \mathbb{P}(r_{t},x_{1:t})/\mathbb{P}(x_{1:t})$
    \STATE $\mathbb{P}(x_{t+1}|x_{1:t}) \leftarrow \sum_{r_{t}}\mathbb{P}(x_{t+1}|x_{t}^{(r)})\mathbb{P}(r_{t}|x_{1:t})$
    \ENDFOR
\end{algorithmic}
\end{algorithm}

Algorithm \ref{alg:cbocpd} presents a theoretically justified online change detection algorithm, CBOCPD. The main idea of CBOCPD is to overcome the limitations of the assumption that the run length is independent of the data. However, we claim that the first term in Equation (\ref{prx4}) can be directly calculated by the likelihood ratio test in Section \ref{sec:case2}. 
The first two lines initialize the parameters. In lines 3--13, Equation (\ref{eq:run_len_prob}) is altered as 
\begin{align*}
\mathbb{P}(r_{t}=0|r_{t-1},x_{t-1}^{(r)}) =
\begin{cases}
1-\delta, &  \tau^*=t \text{ and }\mathfrak{T}_{GLRT}^*=1 \\
\delta, & \mathfrak{T}_{GLRT}^*=0 \\
H_{const}, &\text{otherwise.}
\end{cases}
\end{align*}
Here, we have two likelihood ratio tests, $\mathfrak{T}_{GLRT}^0=\mathbb{I}\left( 2\mathfrak{L} \ge \Tilde{\mathfrak{R}}_{\delta,\mathbb{H}_0}\right)$ and $\mathfrak{T}_{GLRT}^1=\mathbb{I}\left( 2\mathfrak{L} \ge \Tilde{\mathfrak{R}}_{\delta,\mathbb{H}_1}\right)$, with thresholds in line 4. Empirical thresholds are used since the theoretically calculated thresholds in Lemmas \ref{testnull2} and \ref{testalt2} are not tight enough to use in practice. We further define $\mathfrak{T}_{GLRT}^*=1$ if $\mathfrak{T}_{GLRT}^1=1$, and $\mathfrak{T}_{GLRT}^0=1$ and $\mathfrak{T}_{GLRT}^*=0$ if $\mathfrak{T}_{GLRT}^1=0$ and $\mathfrak{T}_{GLRT}^0=0$. The likelihood ratio tests are applied to the window around $t$ with $W=x_{t-m:t+m}$. $\tau^*$ is the time point in the window that maximizes the likelihood ratio, $\tau^*=\text{argmax}_{\tau \in \mathcal{C}_W}2\mathfrak{L}_{\tau}$. Here $\mathcal{C}_W\subseteq\{t-m,...,t+m\}$ is a set of change point candidates for the window. If both likelihood ratio tests at time $\tau^*$ pass and $\tau^*$ coincides with $t$, we decide that $t$ is a CP and set $\mathbb{P}(r_{t}=0|r_{t-1},x_{t-1}^{(r)}) = 1-\delta$, which enhances the probability of change in the BOCPD framework. In contrast, if neither test passes, we strongly believe there is no change and reduce the probability of change in the BOCPD framework. This is why we name this algorithm \textit{Confirmatory} BOCPD. We add $\tau^*=t$ to avoid situations where the same time point maximizes the likelihood ratio in several consecutive windows, resulting in duplicate CPs. Lines 14--19 follow the BOCPD framework \cite{BayesianOnlineChangepoint}.

\subsection{Theoretical Analysis of CBOCPD}
We present sufficient conditions under which CBOCPD provides the lower prediction error compared to BOCPD. Here, the prediction error is defined as the expected absolute difference between the predictive mean given the true run length and the predictive mean under BOCPD and CBOCPD at a (detected) CP $t$, as a good prediction at a CP is a key factor in overall performance. For simplicity, we denote the expected value of $x_t$ under BOCPD and CBOCPD as $\mathbb{E}_{BO}[x_t|x_{1:t-1}]$ and $\mathbb{E}_{CBO}[x_t|x_{1:t-1}]$, respectively. Further we define $\alpha_i = 1-\mathbb{P}_{BO}(r_{t-1}=i|x_{1:t-1})$ under BOCPD and $\beta_i = 1-\mathbb{P}_{CBO}(r_{t-1}=i|x_{1:t-1})$ under CBOCPD.
\subsubsection{Non-stationary Case}
We first investigate the case where a change exists. 

\begin{theorem}\label{thm:main_thm_given_cp}
Consider BOCPD (Sec. 4.1) and CBOCPD (Sec. 4.2) algorithms where CBOCPD uses two statistical tests $\mathfrak{T}_{GLRT}^0$ and $\mathfrak{T}_{GLRT}^1$ with type \rom{2} error of $\delta_0^{\text{\rom{2}}}$ and $\delta_1^{\text{\rom{2}}}$ respectively. When there exists a CP at $t$ with mean value of $\mu_1$ which satisfies either
\begin{align*}
\forall i\in[0,t-1], -\epsilon_U \le \mathbb{E}[x_t|\emptyset]-\mathbb{E}[x_t|x_{i:t-1}] \le -\epsilon_L
\end{align*} 
or
\begin{align*}
\forall i\in[0,t-1], \epsilon_L \le \mathbb{E}[x_t|\emptyset]-\mathbb{E}[x_t|x_{i:t-1}] \le \epsilon_U
\end{align*} 
for $0<\epsilon_L<\epsilon_U$ where $\mathbb{E}[x_t|\emptyset]$ indicates the expected value of $x_t$ with prior distribution,
if
\begin{align*}
\frac{\epsilon_U}{\epsilon_L}\le\alpha_0\left(1+\frac{(1-\delta_0^{\text{\rom{2}}})(1-\delta_1^{\text{\rom{2}}})}{\delta_0^{\text{\rom{2}}}\delta_1^{\text{\rom{2}}}}\right)
\end{align*}
with $\alpha_0$ indicating the probability of the run length not to be zero, then the absolute error of expected value of CBOCPD at $t$ is less than or equal to the one of BOCPD as
\begin{align*}
\mathbb{E}[|\mu_1-\mathbb{E}_{BO}[x_t|x_{1:t-1}]|]\ge \mathbb{E}[|\mu_1-\mathbb{E}_{CBO}[x_t|&x_{1:t-1}]|].    
\end{align*} 

\end{theorem}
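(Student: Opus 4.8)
The plan is to write both predictive means as mixtures over the run length, turn the two absolute errors into scalar multiples of the ``run length is nonzero'' probabilities $\alpha_0,\beta_0$ using the one–sided hypothesis, and then bound $\beta_0$ in expectation through the type~\rom{2} error rates of the two confirmatory tests. Throughout I would condition on $x_{1:t-1}$ (the bounds $\epsilon_L,\epsilon_U$ are stated for the realized history), so that $\mathbb{E}_{BO}[x_t\mid x_{1:t-1}]$ and $|\mu_1-\mathbb{E}_{BO}[x_t\mid x_{1:t-1}]|$ are deterministic and the only remaining randomness sits in the test outcome that CBOCPD uses; the final step is then to average over $x_{1:t-1}$.

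First I would expand $\mathbb{E}_{BO}[x_t\mid x_{1:t-1}]=\sum_{r\ge0}\mathbb{P}_{BO}(r_{t-1}=r\mid x_{1:t-1})\,\mathbb{E}[x_t\mid x_{t-r:t-1}]$ with the $r=0$ term equal to $\mathbb{E}[x_t\mid\emptyset]$, and likewise for CBOCPD; by definition $\sum_{r\ge1}\mathbb{P}_{BO}(r_{t-1}=r\mid x_{1:t-1})=\alpha_0$ and $\sum_{r\ge1}\mathbb{P}_{CBO}(r_{t-1}=r\mid x_{1:t-1})=\beta_0$. Taking $\mu_1=\mathbb{E}[x_t\mid\emptyset]$ (the correct prediction at a change) and subtracting, the $r=0$ terms cancel and
\[
\mu_1-\mathbb{E}_{BO}[x_t\mid x_{1:t-1}]=\sum_{r\ge1}\mathbb{P}_{BO}(r_{t-1}=r\mid x_{1:t-1})\bigl(\mathbb{E}[x_t\mid\emptyset]-\mathbb{E}[x_t\mid x_{t-r:t-1}]\bigr).
\]
Under either branch of the hypothesis every summand shares one sign and has magnitude in $[\epsilon_L,\epsilon_U]$, so $|\mu_1-\mathbb{E}_{BO}[x_t\mid x_{1:t-1}]|\ge\epsilon_L\alpha_0$, and by the identical computation $|\mu_1-\mathbb{E}_{CBO}[x_t\mid x_{1:t-1}]|\le\epsilon_U\beta_0$.

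Next I would pin down $\beta_0$. Because the BOCPD/CBOCPD recursion with a hazard that does not depend on the previous run length collapses to $\mathbb{P}(r_{t-1}=0\mid x_{1:t-1})=H$, one gets $\alpha_0=1-H_{const}$ and $\beta_0=1-H^{*}$, where $H^{*}$ is the hazard the confirmatory rule installs: $H^{*}=1-\delta$ when both $\mathfrak{T}^0_{GLRT}$ and $\mathfrak{T}^1_{GLRT}$ fire and $\tau^{*}=t$, $H^{*}=\delta$ when neither fires, and $H^{*}=H_{const}$ otherwise. Hence $\beta_0\le\delta$ (negligible) on the ``both fire'' event, $\beta_0\le1$ on the ``neither fires'' event, and $\beta_0=1-H_{const}=\alpha_0$ on the remaining ``exactly one fires'' event — on which CBOCPD and BOCPD coincide, so their predictive means are equal there. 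Taking the expectation over the data-dependent test outcome, using $\mathbb{P}(\text{both fire})=(1-\delta_0^{\text{\rom{2}}})(1-\delta_1^{\text{\rom{2}}})$ and $\mathbb{P}(\text{neither})=\delta_0^{\text{\rom{2}}}\delta_1^{\text{\rom{2}}}$ from the two tests' type~\rom{2} errors, and dropping the $O(\delta)$ contribution of ``both fire'',
\[
\mathbb{E}\bigl[|\mu_1-\mathbb{E}_{CBO}[x_t\mid x_{1:t-1}]|\bigr]\le \delta_0^{\text{\rom{2}}}\delta_1^{\text{\rom{2}}}\,\epsilon_U+\bigl(1-(1-\delta_0^{\text{\rom{2}}})(1-\delta_1^{\text{\rom{2}}})-\delta_0^{\text{\rom{2}}}\delta_1^{\text{\rom{2}}}\bigr)\,|\mu_1-\mathbb{E}_{BO}[x_t\mid x_{1:t-1}]| .
\]
Moving the BOCPD term to the left, the claim reduces to $\delta_0^{\text{\rom{2}}}\delta_1^{\text{\rom{2}}}\epsilon_U\le\bigl((1-\delta_0^{\text{\rom{2}}})(1-\delta_1^{\text{\rom{2}}})+\delta_0^{\text{\rom{2}}}\delta_1^{\text{\rom{2}}}\bigr)|\mu_1-\mathbb{E}_{BO}[x_t\mid x_{1:t-1}]|$, and substituting $|\mu_1-\mathbb{E}_{BO}|\ge\epsilon_L\alpha_0$ it suffices that $\epsilon_U/\epsilon_L\le\alpha_0\bigl(1+(1-\delta_0^{\text{\rom{2}}})(1-\delta_1^{\text{\rom{2}}})/(\delta_0^{\text{\rom{2}}}\delta_1^{\text{\rom{2}}})\bigr)$ — exactly the hypothesis.

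The hard part will be the middle step: making the identifications $\mathbb{P}_{CBO}(r_{t-1}=0\mid x_{1:t-1})=H^{*}$ and $\alpha_0=1-H_{const}$ precise from the recursion, verifying that on the ambiguous test outcome CBOCPD literally reproduces the BOCPD posterior so that its error there equals — not merely is bounded by — BOCPD's (otherwise an extra factor $\alpha_0$ leaks into the denominator and breaks the match with the stated bound), and handling the joint behaviour of the two tests so that precisely the product $\delta_0^{\text{\rom{2}}}\delta_1^{\text{\rom{2}}}$ and the factor $(1-\delta_0^{\text{\rom{2}}})(1-\delta_1^{\text{\rom{2}}})$ appear; once those are settled, the remaining rearrangement and the average over $x_{1:t-1}$ are routine.
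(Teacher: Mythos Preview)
Your proposal is correct and follows essentially the same route as the paper: expand both predictive means as run-length mixtures, use the one-sided hypothesis so that all terms $\mathbb{E}[x_t\mid\emptyset]-\mathbb{E}[x_t\mid x_{t-r:t-1}]$ share a sign and lie in $[\epsilon_L,\epsilon_U]$, case on the test outcome with probabilities $(1-\delta_0^{\text{\rom{2}}})(1-\delta_1^{\text{\rom{2}}})$ and $\delta_0^{\text{\rom{2}}}\delta_1^{\text{\rom{2}}}$, and rearrange to the ratio condition. The only cosmetic differences are that the paper bounds the \emph{gain} $|\mu_1-\mathbb{E}_{BO}|-|\mu_1-\mathbb{E}_{CBO}|$ directly in each case (getting $\ge\epsilon_L\alpha_0$ when both fire and $\ge\epsilon_L\alpha_0-\epsilon_U$ when neither fires, then averaging), whereas you bound $|\mu_1-\mathbb{E}_{CBO}|$ and subtract; and the paper silently idealizes $H^\ast=1$ (resp.\ $0$) on the two decisive events rather than carrying and then dropping your $O(\delta)$ term --- your version is slightly more explicit on this point and on the ``exactly one fires'' event, but algebraically the two computations coincide.
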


\begingroup
\footnotesize
\begin{table*}[t!]
\centering
\setlength\tabcolsep{3.5pt}
\scalebox{0.9}{ 
\begin{tabular}{c|cc|cc|cc|cc|cc}  
\toprule
\multicolumn{1}{c|}{} & \multicolumn{2}{c|}{\small{LEN-CHANGE}} & \multicolumn{2}{c|}{\small{VAR-CHANGE}} & \multicolumn{2}{c|}{\small{Gazebo:Env1}} & \multicolumn{2}{c|}{\small{Gazebo:Env2}} & \multicolumn{2}{c}{\small{Gazebo:Env3}}\\

\hline
 \multicolumn{1}{c|}{\small{Method}} & \small{NLL} & \small{MSE} & \small{NLL} & \small{MSE} & \small{NLL} & \small{MSE} & \small{NLL} & \small{MSE} & \small{NLL} & \small{MSE}\\
\hline

\midrule
\small{BOCPD} & 0.99$\pm$0.38 & 0.47$\pm$0.32 & 1.13$\pm$0.61 & 0.55$\pm$0.58 & 2.07$\pm$0.51 & 0.14$\pm$0.05 & 2.24$\pm$0.48 & 0.57$\pm$0.26 & 0.28$\pm$0.12  & 0.11$\pm$0.03  \\
\small{CS-BOCPD}      & 0.95$\pm$0.32 & 0.43$\pm$0.30& 1.05$\pm$0.42 & 0.48$\pm$0.40 & -0.11$\pm$0.33 & 0.12$\pm$0.04 & 1.82$\pm$1.51 & 0.55$\pm$0.27 & 0.25$\pm$0.16 & 0.10$\pm$0.02    \\
\hline
\small{CBOCPD}      & \textbf{0.79$\pm$0.33} & \textbf{0.34$\pm$0.24} & \textbf{0.89$\pm$0.36} & \textbf{0.41$\pm$0.34} & \textbf{-0.31$\pm$0.34} & \textbf{0.11$\pm$0.04} & \textbf{0.69$\pm$0.36} & \textbf{0.45$\pm$0.19}  &\textbf{-0.99$\pm$0.47} & 0.10$\pm$0.04   \\

\bottomrule
\end{tabular}
}
\caption{Comparison of BOCPD, CS-BOCPD, and CBOCPD with NLL and MSE on synthetic and Gazebo robot simulation datasets. }
\label{tab:synthetic}
\end{table*}
\endgroup
\raggedbottom
\subsubsection{Stationary Case}
Next, we investigate the case where a change does not exist. 

\begin{theorem}\label{thm:main_thm_wo_cp}
Consider BOCPD (Sec. 4.1) and CBOCPD (Sec. 4.2) algorithms where CBOCPD uses two statistical tests $\mathfrak{T}_{GLRT}^0$ and $\mathfrak{T}_{GLRT}^1$ with type \rom{1} error of $\delta_0^{\rom{1}}$ and $\delta_1^{\rom{1}}$ respectively. When there exists a confirmed non-CP at $t$ with mean value of $\mu_2$ which satisfies either 
\begin{gather*}
\forall i\in[0,t-1], -\epsilon_U \le \mathbb{E}[x_t|x_{1:t-1}]-\mathbb{E}[x_t|x_{i:t-1}] \le -\epsilon_L,
\end{gather*}
or
\begin{gather*}
\forall i\in[0,t-1], \epsilon_L \le \mathbb{E}[x_t|x_{1:t-1}]-\mathbb{E}[x_t|x_{i:t-1}] \le \epsilon_U
\end{gather*}
for  $0<\epsilon_L<\epsilon_U$, 
if
\begin{gather*}
\frac{\epsilon_U}{\epsilon_L}\le \frac{\alpha_{t-1}((1-\delta_0^{\rom{1}})(1-\delta_1^{\rom{1}})+\delta_0^{\rom{1}}\delta_1^{\rom{1}})}{\beta_{t-1}(1-\delta_0^{\rom{1}})(1-\delta_1^{\rom{1}})+\delta_0^{\rom{1}}\delta_1^{\rom{1}}}
\end{gather*}
with $\alpha_{t-1}$ and $\beta_{t-1}$ indicating the probability of the run length not to be $t-1$ under BOCPD and CBOCPD respectively, then the absolute error of the expected value of CBOCPD at $t$ is less than or equal to the one of BOCPD as
\begin{align*}
\mathbb{E}[|\mu_2-\mathbb{E}_{BO}[x_t|x_{1:t-1}]|]\ge \mathbb{E}[|\mu_2-\mathbb{E}_{CBO}[x_t|&x_{1:t-1}]|].    
\end{align*} 
\end{theorem}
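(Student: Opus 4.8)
Since $t$ is not a change point, the predictive mean under the true run length is $\mu_2=\mathbb{E}[x_t\mid x_{1:t-1}]$, so $e_{BO}:=|\mu_2-\mathbb{E}_{BO}[x_t\mid x_{1:t-1}]|$ and $e_{CBO}:=|\mu_2-\mathbb{E}_{CBO}[x_t\mid x_{1:t-1}]|$ are the two errors to be compared. The plan is to write every predictive mean as a convex combination of the per-run-length predictors $m_i:=\mathbb{E}[x_t\mid x_{i:t-1}]$ weighted by the run-length posterior, and to use that, by hypothesis, every $m_i$ other than the true-run-length one lies on the \emph{same side} of $\mu_2$ at distance in $[\epsilon_L,\epsilon_U]$. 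In the second case of the hypothesis, writing $m_i=\mu_2-d_i$ with $d_i\in[\epsilon_L,\epsilon_U]$ for the non-ideal run lengths and $d_i=0$ for the ideal one (the first case is the mirror image, and the absolute values make the two interchangeable), averaging against the deterministic BOCPD posterior gives $\mathbb{E}_{BO}[x_t\mid x_{1:t-1}]=\mu_2-D_{BO}$ with $D_{BO}=\sum_i\mathbb{P}_{BO}(r_{t-1}=i)\,d_i$; hence $e_{BO}=D_{BO}$ and, since the total non-ideal mass is exactly $\alpha_{t-1}$, $\;\alpha_{t-1}\epsilon_L\le D_{BO}\le\alpha_{t-1}\epsilon_U$. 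The same identity holds for CBOCPD, with the resulting $D_{CBO}$ controlled by the CBOCPD run-length tail $\mathbb{P}_{CBO}(r_{t-1}\ne t-1)$.

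Next I would condition on the outcome of the two tests at the step whose posterior is used to predict $x_t$. Under $\mathbb{H}_0$ this splits the space into three events: (1) \emph{both tests fail} (the confirmed non-CP event), of probability $(1-\delta_0^{\rom{1}})(1-\delta_1^{\rom{1}})$ by the type \rom{1} error assumption, in which Algorithm~\ref{alg:cbocpd} sets $H=\delta$ and so, relative to the constant hazard, moves posterior mass onto the maximal run length, giving $\mathbb{P}_{CBO}(r_{t-1}\ne t-1)\le\beta_{t-1}$ and $D_{CBO}^{(1)}\le\beta_{t-1}\epsilon_U$; (2) \emph{both tests fire with $\tau^*=t$} (a false alarm), of probability at most $\delta_0^{\rom{1}}\delta_1^{\rom{1}}$, in which $H=1-\delta$ and only the trivial bound $D_{CBO}^{(2)}\le\epsilon_U$ is available; and (3) the complementary event, in which neither confirmatory branch triggers, $H=H_{const}$, and the CBOCPD recursion coincides with BOCPD's, so $D_{CBO}^{(3)}=D_{BO}$. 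The monotonicity fact used here --- a smaller single-step hazard only increases the mass on the maximal run length --- is read off the recursion~(\ref{prx4})--(\ref{eq:run_len_prob}).

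Taking expectations over the test outcome with $p_1,p_2,p_3$ the probabilities of events (1)--(3), so $p_1+p_2+p_3=1$ and $D_{BO}=\sum_kp_kD_{BO}$, the event-(3) contributions cancel:
\begin{align*}
\mathbb{E}[e_{BO}]-\mathbb{E}[e_{CBO}]
&=p_1\big(D_{BO}-D_{CBO}^{(1)}\big)+p_2\big(D_{BO}-D_{CBO}^{(2)}\big)\\
&\ge p_1\big(\alpha_{t-1}\epsilon_L-\beta_{t-1}\epsilon_U\big)+p_2\big(\alpha_{t-1}\epsilon_L-\epsilon_U\big)\\
&=\alpha_{t-1}\epsilon_L(p_1+p_2)-\epsilon_U(p_1\beta_{t-1}+p_2),
\end{align*}
using $D_{BO}\ge\alpha_{t-1}\epsilon_L$, $D_{CBO}^{(1)}\le\beta_{t-1}\epsilon_U$ and $D_{CBO}^{(2)}\le\epsilon_U$. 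This is nonnegative exactly when $\epsilon_U/\epsilon_L\le\alpha_{t-1}(p_1+p_2)/(p_1\beta_{t-1}+p_2)$, and substituting $p_1=(1-\delta_0^{\rom{1}})(1-\delta_1^{\rom{1}})$, $p_2=\delta_0^{\rom{1}}\delta_1^{\rom{1}}$ reproduces the stated condition $\frac{\epsilon_U}{\epsilon_L}\le\frac{\alpha_{t-1}((1-\delta_0^{\rom{1}})(1-\delta_1^{\rom{1}})+\delta_0^{\rom{1}}\delta_1^{\rom{1}})}{\beta_{t-1}(1-\delta_0^{\rom{1}})(1-\delta_1^{\rom{1}})+\delta_0^{\rom{1}}\delta_1^{\rom{1}}}$, which gives $\mathbb{E}[e_{BO}]\ge\mathbb{E}[e_{CBO}]$. (This mirrors the non-stationary Theorem~\ref{thm:main_thm_given_cp}, except that there the favourable event is ``both tests fire'', so $H=1-\delta$ forces $\mathbb{P}(r_{t-1}=0)\approx1$ and no $\beta$ survives in the bound, whereas here the favourable event is ``both tests fail'' and a genuine tail $\beta_{t-1}$ remains.)

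\textbf{Main obstacle.} The convex-combination identities and the within-regime bounds --- which follow from the ``same-side'' hypothesis and the definition of $\beta_{t-1}$ --- are routine. The real work lies in (a) making precise what the outer expectation ranges over and why the three regime probabilities are exactly those expressions: the statement takes them to factor as products of the individual type \rom{1} error probabilities, an independence simplification given that $\mathfrak{T}_{GLRT}^0$ and $\mathfrak{T}_{GLRT}^1$ threshold the same statistic $2\mathfrak{L}$ and are therefore nested, so one must either adopt this as the working model or re-run the accounting with the true correlated probabilities (the ``neutral'' third event absorbing the discrepancy); and (b) establishing $D_{CBO}^{(3)}=D_{BO}$, i.e.\ that in the neutral regime the CBOCPD run-length posterior genuinely coincides with BOCPD's, which requires controlling the interplay of the confirmatory hazard updates across earlier steps and invoking the hazard-monotonicity of~(\ref{prx4})--(\ref{eq:run_len_prob}) (also what keeps the stated condition non-vacuous, via $\beta_{t-1}\le\alpha_{t-1}$). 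This is where I expect the bulk of the effort to go.
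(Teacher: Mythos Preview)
Your proposal follows essentially the same route as the paper's proof: bound $e_{BO}$ below by $\alpha_{t-1}\epsilon_L$ via the convex-combination identity and the one-sided hypothesis, case-split on $\mathfrak{T}_{GLRT}^*\in\{0,1\}$ to bound $e_{CBO}$ above by $\beta_{t-1}\epsilon_U$ and $\epsilon_U$ respectively, and then average against the probabilities $(1-\delta_0^{\rom{1}})(1-\delta_1^{\rom{1}})$ and $\delta_0^{\rom{1}}\delta_1^{\rom{1}}$ to recover the stated ratio condition. Your explicit third ``neutral'' event (which the paper silently drops, since CBOCPD coincides with BOCPD there and the gain is zero) and the obstacles you flag about test independence and the effect of earlier hazard updates are points the paper itself does not address --- it simply takes the product form of the test-outcome probabilities and the definition of $\beta_{t-1}$ as given --- so you are being more careful than the original, not missing anything it supplies.
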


\section{Experimental Evaluations}

\begingroup
\footnotesize
\begin{table}[h]
\centering
\setlength\tabcolsep{3.5pt}
\scalebox{0.9}{ 
\begin{tabular}{c | c c | c c}  
\toprule
\small{Method}  & \small{NLL} & \small{p-value} & \small{MSE} & \small{p-value}\\

\hline
 \multicolumn{5}{c}{Nile Data (200 training points, 463 test points)} \\
\hline

\midrule
\small{ARGP} & 1.07$\pm$0.64 & $<$0.0001 & 5.06$\pm$0.86 & 0.0005   \\
\small{ARGP-BOCPD}      & 0.78$\pm$0.72 & $<$0.0001 & 4.94$\pm$0.87 & 0.0017     \\
\small{GPTS} & 0.86$\pm$0.64 & $<$0.0001 & 4.78$\pm$0.81 & 0.0100      \\
\small{BOCPD} & 0.57$\pm$0.77 & 0.0014 & 4.73$\pm$0.82 & 0.0115    \\
\small{CBOCPD}      & \textbf{0.00$\pm$0.80} & N/A & \textbf{4.32$\pm$0.74}  & N/A      \\

\hline
 \multicolumn{5}{c}{Well Log Data (500 training points, 3050 test points)} \\
\hline

\small{ARGP} & 7.20$\pm$0.60 & $<$0.0001 & 17.3$\pm$2.6 & $<$0.0001 \\
\small{ARGP-BOCPD}       & \textbf{0.00$\pm$0.30} & N/A & \textbf{4.68$\pm$0.46} & N/A \\
\small{GPTS} & 3.73$\pm$0.42 & $<$0.0001 & 8.27$\pm$0.61 & $<$0.0001     \\
\small{BOCPD}  & 4.35$\pm$0.31 & $<$0.0001 & 19.2$\pm$1.3 & $<$0.0001  \\
\small{CBOCPD}      & 0.30$\pm$0.27 & 0.0010 & 4.92$\pm$0.44 & 0.2124    \\

\hline
 \multicolumn{5}{c}{Snow Data (500 training points, 13380 test points)} \\
\hline

\small{ARGP} & 17.48$\pm$0.82 & $<$0.0001 & 14.82$\pm$0.57 & $<$0.0001  \\
\small{ARGP-BOCPD} & 0.06$\pm$0.39 & $<$0.0001 & 9.65$\pm$0.39 & $<$0.0001       \\
\small{GPTS} & 16.60$\pm$0.22 & $<$0.0001 & 8.76$\pm$0.36 & $<$0.0001  \\
\small{BOCPD}  & \textbf{0.00$\pm$0.39} & N/A & 9.43$\pm$0.38 & N/A \\
\small{CBOCPD}  & 1.92$\pm$0.37 & $<$0.0001 &  \textbf{6.34$\pm$0.27}  & $<$0.0001       \\

\bottomrule
\end{tabular}
}
\caption{Results of predictive performance on Nile data, Well Log Data and Snow Data. The results are provided with 95\% of confidence interval and the p-value 
of the null hypothesis that a method is equivalent to the best performing method according to NLL, using a one sided t-test.}
\label{tab:real}
\end{table}
\endgroup
\raggedbottom

\subsection{Synthetic Data}
In this experiment, we investigate two synthetic datasets generated by GPs with changes in the length scale and the variance of a Radial Basis Function kernel, respectively. Observations are obtained by adding the white Gaussian noise with the variance $\sigma_{no}^2=0.1$. For both datasets, two CPs are drawn uniformly from time intervals $(75, 125)$ and $(275, 325)$ with end time $T=400$.

For the first experiment, the length scale $l$ switches from $3$ to $20$ and from $20$ to $1$ with the variance $\sigma^2=1$. For the second experiment, the variance $\sigma^2$ switches from $1$ to $4$ and from $4$ to $0.3$ with the length scale $l=3$. We compare the proposed CBOCPD with BOCPD and CS (CUSUM)-BOCPD. For BOCPD, we use a modified version of the stationary GP-BOCPD \cite{GP-BOCPD} by fixing the timescale $\lambda$. CS-BOCPD uses CUSUM instead of the our proposed likelihood ratio test in Algorithm \ref{alg:cbocpd}. For all algorithms, the timescale of $\lambda=200$ is used.

Figure \ref{fig:synthetic_len_change_big_timescale} and Figure \ref{fig:synthetic_len_change_small_timescale} show that CBOCPD identifies the length scale change in the data with the help of statistical tests, whereas BOCPD captures the change too less or too many times. The first two columns in Table \ref{tab:synthetic} present the quantitative results from 100 runs of each algorithm after training on the first 100 points and testing on the remaining 300 points. CBOCPD outperforms BOCPD in terms of both negative log-likelihood (NLL) and mean squared error (MSE). The differences in performances become larger when there is a stronger correlation among the samples. 


\subsection{Gazebo Robot Simulation Data}
We also conducted experiments on the Gazebo robot simulator to detect changes in the environment of a robot. We gathered data by moving the robot through a changing environment (i.e., varying ground properties). We used the Pioneer3AT robot with kinetic ROS. There are three environments. In the first environment (Env1), the ground changes at the midpoint from `Plane ground' to `Bumpy ground 1'. In the second environment (Env2), the robot moves from `Bumpy ground 1' to `Bumpy ground 2' where `Bumpy ground 2' is more coarser than `Bumpy ground 1'. Finally, in the third environment (Env3), the robot moves from `Bumpy ground 2' back to the `Plane ground'. For simplicity, we used only z-directional position data as the inputs. The last three columns in Table \ref{tab:synthetic} show that CBOCPD outperforms other methods in all environments. Interestingly, the results show that CUSUM test does not help BOCPD much when the variance decreases as in Env3 but proposed likelihood ratio test improves BOCPD in all the cases.

\subsection{Real World Data}
Further, we compare our proposed algorithm with autoregressive GP (ARGP), autoregressive GP with BOCPD (ARGP-BOCPD), GP time series with BOCPD (BOCPD) and GP time series model (GPTS). For baseline methods we used the publicly available source codes\footnote{\url{https://sites.google.com/site/wwwturnercomputingcom/software/ThesisCodeAndData.zip}}.

We first consider Nile data which records the lowest annual water levels on the Nile river during AD 622-1284 measured at the island of Roda. There is domain knowledge suggesting a CP in the year 715 due to the construction of a `nilometer' \cite{popper1951cairo,balek1977hydrology}. 
Secondly, we used Well Log data which records 4050 nuclear magnetic resonance measurements while drilling a well.
We also considered Snowfall data which records daily snowfall data in Whistler, BC, Canada. In this dataset each method is evaluated by the ability to predict snowfall of next day using 35 years of test data.

Table \ref{tab:real} shows the predictive performance of our proposed algorithm compared to other GP based BOCPD methods. In Nile data, we see that combining BOCPD method with GP improves the performance. CBOCPD further improves the performance by leveraging the proposed statistical hypothesis tests and outperforms all other algorithms. In Well Log data, the nonlinear temporal correlations within each regime give a slight advantage to underlying predictive model of ARGP. However, CBOCPD still shows the competitive result. In Snowfall data, CBOCPD well detects the difference in noise levels and achieves the best performance in MSE.

\section{Conclusion}
This paper presents a novel framework for detecting changes in the covariance structure of the GP. We propose a semi-online CPD algorithm, Confirmatory BOCPD, which is an improved version of BOCPD with statistical hypothesis tests. Experiments using synthetic and real-world datasets demonstrate that CBOCPD outperforms conventional BOCPDs.  

\section*{Acknowledgments}
 This work was supported by the Industrial Convergence Core Technology Development Program funded by the Ministry of Trade, Industry \& Energy (MOTIE), Korea (No. 10063172, Development of robot intelligence technology for mobility with learning capability toward robust and seamless indoor and outdoor autonomous navigation) and Institute for Information \& communications Technology Planning \& Evaluation (IITP) grant funded by the Ministry of Science and ICT (MSIT), Korea (No. 2017-0-01779, A machine learning and statistical inference framework for explainable artificial intelligence).

\begingroup
\footnotesize
\bibliographystyle{named}
\bibliography{winnower_template}
\endgroup


\clearpage

\appendix
\onecolumn

\section{Proofs for Sections 3 and 4}\label{sec:proof_sec3}

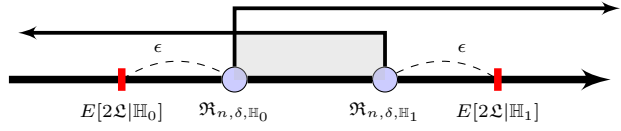
\begin{figure}[h]
\centering

\begin{tikzpicture}[auto, node distance=2cm,>=latex']
    \node [input, name=input] {};
    \node [sum, right of=input, xshift=1cm,label={below:$\mathfrak{R}_{n,\delta,\mathbb{H}_0}$}] (sum) {};
    \node [sum, right of=sum, label={below:$\mathfrak{R}_{n,\delta,\mathbb{H}_1}$}] (controller) {};
    \node [output, right of=controller, xshift=1cm] (output) {};
    \node [hidden, above of=output, yshift=-1.2cm] (output2) {};
    \node [hidden, above of=input, yshift=-1.4cm] (output3) {};
    \node [block, right of=sum,yshift=0.3cm, xshift=-1cm, minimum width=2cm, minimum height=0.6cm] (shade) {};
    
    

    \draw [draw,-, line width=1mm] (input) --  (sum);
    
    \draw [-,line width=1mm] (sum) -- (controller);
    \draw [->,line width=1mm] (controller) -- node [name=y] {}(output);
    
    \draw [style=dashed] (1.5,0) to [bend left] node {$\epsilon$}(sum);
	\draw [style=dashed] (controller) to [bend left] node {$\epsilon$}(6.5,0);

	\draw [->, line width=0.5mm] (sum) |- node[pos=0.99] {}
        node [near end] {} (output2.50);
	\draw [->,line width=0.5mm] (controller) |- node[pos=0.3] {}
        node [near end] {} (output3.10);

    \draw [thick, red,line width=1mm] (1.5,0.15)  -- (1.5,-0.15) node[below, black] {$ E[2\mathfrak{L}|\mathbb{H}_0]$};
    \draw [thick, red,line width=1mm] (6.5,0.15)  -- (6.5,-0.15) node[below, black] {$ E[2\mathfrak{L}|\mathbb{H}_1]$};
    
\end{tikzpicture}
\caption{Range of thresholds that guarantee bounded type \rom{1} (arrow pointing right) and type \rom{2} (arrow pointing left) errors.}
\label{fig:onesidetest}
\end{figure}


\begin{definition}[Subgaussianity]\label{subgaussian}
A random variable $X$ is $\sigma$-subgaussian if for all $\lambda\in\mathbb{R}$ it holds that $\mathbb{E}[\exp(\lambda X)] \le \exp(\lambda^{2}\sigma^{2}/2)$.
\end{definition}

\begin{lemma}\label{bdd_subgaussian}
If $X$ is a bounded and centered random variable, with $X \in [a,b]$, then $X$ is $\frac{b-a}{2}$-subgaussian.
\end{lemma}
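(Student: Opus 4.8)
The plan is to prove this classical Hoeffding-type lemma through the cumulant generating function. Set $\psi(\lambda) := \log \mathbb{E}[\exp(\lambda X)]$, which is finite for every $\lambda \in \mathbb{R}$ since $X$ is bounded. The goal is to establish $\psi(\lambda) \le \tfrac{\lambda^2}{2}\big(\tfrac{b-a}{2}\big)^2$ for all $\lambda$, which is exactly the condition of Definition \ref{subgaussian} with parameter $\sigma = (b-a)/2$.

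First I would record the boundary data. Clearly $\psi(0) = 0$, and differentiating under the expectation (legitimate because $X \in [a,b]$ makes all the relevant integrands uniformly bounded) gives $\psi'(\lambda) = \mathbb{E}[X e^{\lambda X}]/\mathbb{E}[e^{\lambda X}]$, hence $\psi'(0) = \mathbb{E}[X] = 0$ by the centering hypothesis. Differentiating once more yields $\psi''(\lambda) = \mathbb{E}_{\mathbb{Q}_\lambda}[X^2] - (\mathbb{E}_{\mathbb{Q}_\lambda}[X])^2$, the variance of $X$ under the exponentially tilted law $d\mathbb{Q}_\lambda \propto e^{\lambda X}\, d\mathbb{P}$.

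The only step requiring an idea is the uniform bound $\psi''(\lambda) \le (b-a)^2/4$. Since $X$ still takes values in $[a,b]$ under $\mathbb{Q}_\lambda$, it suffices to note that any random variable supported in $[a,b]$ has variance at most $(b-a)^2/4$: with $c := (a+b)/2$ one has $\mathrm{Var}(X) \le \mathbb{E}[(X-c)^2] \le (b-a)^2/4$ because $|X-c| \le (b-a)/2$ almost surely. Then Taylor's theorem with Lagrange remainder gives, for some $\xi$ between $0$ and $\lambda$, $\psi(\lambda) = \psi(0) + \lambda\psi'(0) + \tfrac{\lambda^2}{2}\psi''(\xi) \le \tfrac{\lambda^2}{2}\cdot\tfrac{(b-a)^2}{4}$, i.e. $\mathbb{E}[e^{\lambda X}] \le \exp\!\big(\tfrac{\lambda^2}{2}\sigma^2\big)$ with $\sigma = (b-a)/2$, which is the claim.

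An equivalent route avoiding differentiation would bound $e^{\lambda X}$ pointwise by the chord of the convex function $e^{\lambda(\cdot)}$ on $[a,b]$, take expectations using $\mathbb{E}[X] = 0$ to reduce to the scalar function $L(u) = -pu + \log(1-p+pe^u)$ with $p = -a/(b-a)$, and then verify $L(0) = L'(0) = 0$ and $L''(u) \le 1/4$; I expect the tilted-variance argument above to be the cleanest to write. There is no genuine obstacle — the result is standard — the one point to handle with care is the justification of differentiation under the expectation, which is immediate from the boundedness of $X$.
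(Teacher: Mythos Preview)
Your proof is correct and complete: the tilted-variance argument via $\psi(\lambda)=\log\mathbb{E}[e^{\lambda X}]$, $\psi(0)=\psi'(0)=0$, and $\psi''(\lambda)\le (b-a)^2/4$ is the standard derivation of Hoeffding's lemma, and it matches Definition~\ref{subgaussian} exactly. The paper, however, does not actually prove this lemma at all --- it simply cites Chapter~5.2 of Lattimore and Szepesv\'ari's \emph{Bandit Algorithms} --- so your write-up is strictly more self-contained than what appears in the paper.
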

Please refer to Ch. 5.2 in \cite{lattimore2018bandit}.

\begin{theorem}[Chernoff Bound]\label{chernoff bound}
If $X$ is $\sigma$-subgaussian, then for any $\epsilon \ge 0$,
\begin{align*}
\mathbb{P}(X \ge \epsilon) \le \exp(-\frac{\epsilon^2}{2\sigma^2}), \;\; \text{and } \;\; \mathbb{P}(X \le -\epsilon) \le \exp(-\frac{\epsilon^2}{2\sigma^2}).
\end{align*}
\end{theorem}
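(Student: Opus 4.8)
The plan is to apply the exponential Markov inequality (the Chernoff method) and then optimize over the free parameter supplied by the subgaussian bound. First I would fix any $\lambda > 0$ and observe that, since $x \mapsto e^{\lambda x}$ is strictly increasing, the event $\{X \ge \epsilon\}$ coincides with $\{e^{\lambda X} \ge e^{\lambda \epsilon}\}$. Applying Markov's inequality to the nonnegative random variable $e^{\lambda X}$ then gives $\mathbb{P}(X \ge \epsilon) \le e^{-\lambda \epsilon}\,\mathbb{E}[e^{\lambda X}]$.

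Next I would invoke the defining property of $\sigma$-subgaussianity from Definition~\ref{subgaussian}, namely $\mathbb{E}[e^{\lambda X}] \le e^{\lambda^2 \sigma^2 / 2}$, which holds for every real $\lambda$ and in particular for the chosen $\lambda > 0$. Combining the two bounds yields $\mathbb{P}(X \ge \epsilon) \le \exp(-\lambda \epsilon + \lambda^2 \sigma^2 / 2)$, valid for all $\lambda > 0$. Since the left-hand side is independent of $\lambda$, I would minimize the exponent $g(\lambda) = -\lambda \epsilon + \lambda^2 \sigma^2/2$ over $\lambda > 0$. Setting $g'(\lambda) = -\epsilon + \lambda \sigma^2 = 0$ yields the minimizer $\lambda^\star = \epsilon/\sigma^2$, which is nonnegative because $\epsilon \ge 0$ and hence lies in the admissible range; substituting back gives $g(\lambda^\star) = -\epsilon^2/(2\sigma^2)$ and therefore the first claimed bound. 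For the degenerate case $\epsilon = 0$ the stated inequality reads $\mathbb{P}(X \ge 0) \le 1$, which holds trivially.

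Finally, for the lower tail I would exploit the symmetry already present in the definition: because $\mathbb{E}[e^{\lambda X}] \le e^{\lambda^2 \sigma^2/2}$ holds for \emph{all} $\lambda \in \mathbb{R}$, the variable $-X$ is itself $\sigma$-subgaussian. Applying the upper-tail result just established to $-X$ gives $\mathbb{P}(-X \ge \epsilon) \le \exp(-\epsilon^2/(2\sigma^2))$, which is exactly $\mathbb{P}(X \le -\epsilon) \le \exp(-\epsilon^2/(2\sigma^2))$, completing the proof. There is no genuine obstacle in this argument; the only point requiring a moment's care is verifying that the optimal $\lambda^\star$ is positive, so that the application of Markov's inequality to $e^{\lambda X}$ is legitimate, and this is guaranteed by the hypothesis $\epsilon \ge 0$.
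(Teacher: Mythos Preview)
Your argument is correct and is precisely the standard Chernoff-method proof: exponential Markov inequality, subgaussian moment-generating-function bound, then optimization over $\lambda$, with the lower tail handled by symmetry. The paper does not supply its own proof of this theorem; it is stated in the appendix as a background result (alongside the reference to Ch.~5.2 of \cite{lattimore2018bandit} for the companion lemma), so there is nothing to compare against beyond noting that your proof is the textbook one the paper implicitly invokes.
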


\begin{theorem}[Cauchy Interlace Theorem]\label{cauchy_interlace_theorem}
Let $A$ be a Hermitian matrix of order $n$, and let $B$ be a principal submatrix of $A$ of order $n-1$, and denote the $k$th largest eigenvalues of a general $n \times n$ matrix by $\lambda_k(\Sigma)$ so that
\begin{align*}
\lambda_1(\Sigma) \ge \lambda_2(\Sigma) \ge \cdots \ge \lambda_n(\Sigma), \; \forall k \in [1, n].
\end{align*}
If $\lambda_n(A) \le \lambda_{n-1}(A) \le \dots \le \lambda_2(A) \le \lambda_1(A)$ and $\lambda_{n-1}(B) \le \lambda_{n-2}(B) \le \dots \le \lambda_2(B) \le \lambda_1(B)$, then $\lambda_n(A) \le \lambda_{n-1}(B) \le \lambda_{n-1}(A) \le \lambda_{n-2}(B) \le \cdots \le \lambda_{2}(A) \le \lambda_{1}(B) \le \lambda_{1}(A)$.
\end{theorem}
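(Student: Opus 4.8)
The plan is to derive the interlacing relation $\lambda_{k+1}(A)\le\lambda_k(B)\le\lambda_k(A)$ for every $k=1,\dots,n-1$ directly from the Courant--Fischer min--max characterization of the eigenvalues of a Hermitian matrix, since this chain of inequalities is exactly what the statement asserts. First I would make a harmless reduction. A principal submatrix of order $n-1$ is obtained by deleting one row together with the matching column; conjugating $A$ by the permutation matrix that carries that index to position $n$ is a unitary similarity, so it preserves the Hermitian property and the spectra of both $A$ and the submatrix. Hence I may assume without loss of generality that $B$ is the leading $(n-1)\times(n-1)$ block of $A$. Identifying $\mathbb{C}^{n-1}$ with the subspace $W=\{x\in\mathbb{C}^n : x_n=0\}$, the elementary observation that drives everything is the Rayleigh-quotient identity: for $x\in W$ with truncation $\tilde x\in\mathbb{C}^{n-1}$, one has $x^*Ax=\tilde x^*B\tilde x$ and $x^*x=\tilde x^*\tilde x$, so the quotient $(x^*Ax)/(x^*x)$ on $W$ coincides with the quotient of $B$.

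For the upper bound I would use the max--min form of Courant--Fischer. For the $(n-1)$-dimensional problem, $\lambda_k(B)=\max_{\dim S=k,\,S\subseteq W}\min_{0\ne x\in S}(x^*Ax)/(x^*x)$, where I have already replaced $B$ by $A$ via the Rayleigh-quotient identity. Since every $k$-dimensional $S\subseteq W$ is in particular a $k$-dimensional subspace of $\mathbb{C}^n$, the maximum taken over this smaller family of subspaces cannot exceed the maximum over all $k$-dimensional subspaces of $\mathbb{C}^n$, which by Courant--Fischer is $\lambda_k(A)$. This yields $\lambda_k(B)\le\lambda_k(A)$.

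For the lower bound I would invoke the dual min--max form. Writing $\lambda_k(B)=\min_{\dim T=n-k,\,T\subseteq W}\max_{0\ne x\in T}(x^*Ax)/(x^*x)$, where the subspace dimension is $(n-1)-k+1=n-k$, and observing once more that each such $T$ is also an $(n-k)$-dimensional subspace of $\mathbb{C}^n$, the minimum over the restricted family is at least the minimum over all $(n-k)$-dimensional subspaces of $\mathbb{C}^n$. By Courant--Fischer the latter equals $\lambda_{k+1}(A)$, since $n-(k+1)+1=n-k$, giving $\lambda_{k+1}(A)\le\lambda_k(B)$. Combining the two bounds across all $k$ produces precisely the asserted interlacing chain $\lambda_n(A)\le\lambda_{n-1}(B)\le\lambda_{n-1}(A)\le\cdots\le\lambda_1(B)\le\lambda_1(A)$.

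I do not expect a genuine obstacle in this argument; the only point demanding care is the dimension bookkeeping that matches the two Courant--Fischer forms to the correct indices, in particular that one restricted subspace $T\subseteq W$ of dimension $n-k$ simultaneously represents $\lambda_k$ of the $(n-1)$-dimensional problem and bounds $\lambda_{k+1}$ of the $n$-dimensional one. The supporting ingredient is the Rayleigh-quotient identity on $W$, which is what legitimizes freely interchanging $B$ acting on $\mathbb{C}^{n-1}$ with $A$ restricted to $W$ throughout both estimates.
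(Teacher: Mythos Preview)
Your argument via the Courant--Fischer min--max principle is correct and is the standard textbook route to the interlacing inequalities; the reduction to the leading principal block, the Rayleigh-quotient identification on $W=\{x:x_n=0\}$, and the dimension bookkeeping (in particular that an $(n-k)$-dimensional $T\subseteq W$ corresponds to $\lambda_k$ of the $(n-1)$-dimensional problem but to $\lambda_{k+1}$ of the $n$-dimensional one) are all handled properly.

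There is nothing to compare against: the paper does not supply its own proof of this theorem. It is listed in the appendix alongside other classical facts (the Chernoff bound, the bounded-implies-subgaussian lemma) as a cited background result used to support Corollary~\ref{ineq_eigv_submat} and Lemma~\ref{lemma:c01c02}. Your proposal therefore goes beyond what the paper offers.
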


\begin{corollary}\label{ineq_eigv_submat}
Let $A, B, C_t \in \mathbb{R}^{n \times n}$ be positive semi-definite matrix defined element wise as $A_{i,j}=K(i, j)$, $B_{i,j}=K'(i, j)$ and $C_t$ is defined as 
\begin{align*}
C_{t,(i,j)}=\begin{cases}
K(i,j), &i,j < t  \\
K'(i,j), &i,j \ge t  \\
0, &\text{otherwise}
\end{cases}
\end{align*}
for all $t\in[1, n]$. Then, $\forall 1 \le k \le n, (\lambda_n(A) \wedge \lambda_n(B)) \le \lambda_k(C_t) \le (\lambda_1(A) \vee \lambda_1(B))$, where $\wedge$ and $\vee$ stand for minimum and maximum operators.

\end{corollary}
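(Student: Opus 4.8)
The plan is to exploit the block-diagonal structure of $C_t$ and then bound the eigenvalues of each block by eigenvalue interlacing.

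First I would observe that, since the indices $1,\dots,n$ are already listed in order, $C_t$ is literally block diagonal: writing $A^{(t)}$ for the principal submatrix of $A$ on the index set $\{1,\dots,t-1\}$ and $B^{(t)}$ for the principal submatrix of $B$ on $\{t,\dots,n\}$, we have $C_t = \mathrm{diag}\bigl(A^{(t)}, B^{(t)}\bigr)$, with the understanding that the first block is absent when $t=1$ (so $C_1 = B$) and that one of the blocks may be $1\times 1$ when $t=n$. Hence the multiset $\{\lambda_i(C_t)\}_{i=1}^n$ is exactly the union of $\{\lambda_i(A^{(t)})\}$ and $\{\lambda_j(B^{(t)})\}$, and it suffices to sandwich the eigenvalues of each of these two principal submatrices between the extreme eigenvalues of the corresponding full matrix.

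Next I would promote the one-step Cauchy Interlace Theorem (Theorem \ref{cauchy_interlace_theorem}) to a multi-step form: if $P$ is an $m\times m$ principal submatrix of a Hermitian matrix $M\in\mathbb{R}^{n\times n}$, then $\lambda_n(M)\le\lambda_i(P)\le\lambda_1(M)$ for every $i\in[1,m]$. This follows by building a chain $M = M_0, M_1,\dots,M_{n-m} = P$ in which each $M_{l+1}$ is obtained from $M_l$ by deleting one row together with the corresponding column, applying Theorem \ref{cauchy_interlace_theorem} to each consecutive pair $(M_l, M_{l+1})$, and telescoping: at every deletion the largest eigenvalue cannot increase and the smallest cannot decrease, so $\lambda_1(M_{n-m})\le\lambda_1(M_0)$ and $\lambda_n(M_0)\le\lambda_n(M_{n-m})$, while all intermediate eigenvalues of $P$ lie in $[\lambda_n(M),\lambda_1(M)]$. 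Applying this with $(M,P)=(A,A^{(t)})$ and with $(M,P)=(B,B^{(t)})$ gives $\lambda_n(A)\le\lambda_i(A^{(t)})\le\lambda_1(A)$ and $\lambda_n(B)\le\lambda_j(B^{(t)})\le\lambda_1(B)$.

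Combining these two families of bounds with the spectrum of the block-diagonal $C_t$ yields $\lambda_n(A)\wedge\lambda_n(B)\le\lambda_k(C_t)\le\lambda_1(A)\vee\lambda_1(B)$ for every $k\in[1,n]$, which is the assertion. I do not expect a substantive obstacle: the only care points are the degenerate block sizes (an empty $A$-block at $t=1$, a $1\times1$ block at $t=n$), where the claimed inequalities are immediate, and stating the iterated interlacing with the correct monotonicity of the extreme eigenvalues.
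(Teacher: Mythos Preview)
Your argument is correct and is exactly the intended derivation: the paper states this result as a corollary of the Cauchy Interlace Theorem without giving a separate proof, and your block-diagonal decomposition of $C_t$ together with iterated one-step interlacing on each principal submatrix is precisely how one obtains it. One tiny notational slip: when you write $\lambda_n(M_{n-m})$ you mean the smallest eigenvalue of the $m\times m$ matrix $P$, which under the paper's indexing convention is $\lambda_m(P)$, not $\lambda_n(P)$; the argument is unaffected.
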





\begin{proof}[Proof of Lemma \ref{lemma:c01c02}]
The proof follows from Corollary \ref{ineq_eigv_submat}.
\end{proof}

\begin{proof}[Proof of Theorem \ref{thm:null_distribution_case}]
The quadratic forms in $2\mathfrak{L}$ can be reorganized as 
\begin{align*}
X^T(\Sigma)^{-1}X - X^T(\Sigma_{t^*}^{'})^{-1}X = X^T\Sigma^{-\frac{1}{2}}\Sigma^{\frac{1}{2}}\Sigma^{-1}\Sigma^{\frac{1}{2}}\Sigma^{-\frac{1}{2}}X - X^T\Sigma^{-\frac{1}{2}}\Sigma^{\frac{1}{2}}(\Sigma_{t^*}^{'})^{-1}\Sigma^{\frac{1}{2}}\Sigma^{-\frac{1}{2}}X
\end{align*}
Letting $X'=\Sigma^{-\frac{1}{2}}X$, one has $X' \sim N(\boldsymbol{0},I)$ since $X \sim N(\boldsymbol{0},\Sigma)$, and then it gives:
\begin{align*}
{X'}^{T}X' - {X'}^T\Sigma^{\frac{1}{2}}(\Sigma_{t^*}^{'})^{-1}\Sigma^{\frac{1}{2}}X'
\end{align*}
The matrix $\Sigma^{\frac{1}{2}}(\Sigma_{t^*}^{'})^{-1}\Sigma^{\frac{1}{2}}$ is symmetric as both matrices $\Sigma$ and $\Sigma_{t^*}^{'}$ are symmetric. Performing eigenvalue decomposition gives $\Sigma^{\frac{1}{2}}(\Sigma_{t^*}^{'})^{-1}\Sigma^{\frac{1}{2}}=Q\Lambda Q^T$ where $Q$ is an orthogonal matrix and diagonal elements of $\Lambda$ is non-negative since $\Sigma$ and $\Sigma_{t^*}^{'}$ are positive semi-definite matrices. Letting  $X''=Q^TX'$, one has $X'' \sim N(\boldsymbol{0},I)$ since $Q$ is an orthogonal matrix, and then the equation is further transformed as:
\begin{align*}
{X'}^{T}X' - {X''}^T\Lambda{X''} = \sum_{i=1}^{n}v_i-\sum_{i=1}^{n}\lambda_iu_i
\end{align*}
where $\lambda_1,...,\lambda_n$ are the eigenvalues of $\Lambda$ and $u_i,v_i\sim\chi_1^2$, a chi-squared distribution with degree 1.
The last equality holds as Q is orthonormal.

\end{proof}

\begin{proof}[Proof of Theorem \ref{thm:alter_distribution_case}]
The proof proceeds in a similar manner as Theorem \ref{thm:null_distribution_case} except for that $X \sim N(\boldsymbol{0},{\Sigma_{t^*}^{'}})$. The quadratic forms in $2\mathfrak{L}$ can be reorganized as 
\begin{align*}
X^T(\Sigma)^{-1}X - X^T(\Sigma_{t^*}^{'})^{-1}X = X^T\Sigma{'}_{t*}^{-\frac{1}{2}}\Sigma{'}_{t*}^{\frac{1}{2}}\Sigma^{-1}\Sigma{'}_{t*}^{\frac{1}{2}}\Sigma{'}_{t*}^{-\frac{1}{2}}X - X^T\Sigma{'}_{t*}^{-\frac{1}{2}}\Sigma{'}_{t*}^{\frac{1}{2}}(\Sigma_{t^*}^{'})^{-1}\Sigma{'}_{t*}^{\frac{1}{2}}\Sigma{'}_{t*}^{-\frac{1}{2}}X
\end{align*}
Letting $X'=\Sigma{'}_{t*}^{-\frac{1}{2}}X$, one has $X' \sim N(\boldsymbol{0},I)$ since $X \sim N(\boldsymbol{0},\Sigma{'}_{t*})$, and then it gives:
\begin{align*}
{X'}^T\Sigma{'}_{t*}^{\frac{1}{2}}\Sigma^{-1}\Sigma{'}_{t*}^{\frac{1}{2}}X' - {X'}^{T}X'argmaxtZt
\end{align*}
The matrix $\Sigma{'}_{t*}^{\frac{1}{2}}\Sigma^{-1}\Sigma{'}_{t*}^{\frac{1}{2}}$ is symmetric as both matrices $\Sigma$ and $\Sigma_{t^*}^{'}$ are symmetric. Performing eigenvalue decomposition gives $\Sigma{'}_{t*}^{\frac{1}{2}}\Sigma^{-1}\Sigma{'}_{t*}^{\frac{1}{2}}=Q\Lambda Q^T$ where $Q$ is an orthogonal matrix and diagonal elements of $\Lambda$ is non-negative since $\Sigma$ and $\Sigma_{t^*}^{'}$ are positive semi-definite matrices. Letting  $X''=Q^TX'$, one has $X'' \sim N(\boldsymbol{0},I)$ since $Q$ is an orthogonal matrix, and then the equation is further transformed as:
\begin{align*}
{X''}^T\Lambda{X''} - {X'}^{T}X' = \sum_{i=1}^{n}\lambda_iu_i-\sum_{i=1}^{n}v_i
\end{align*}
where $\lambda_1,...,\lambda_n$ are the eigenvalues of $\Lambda$ and $u_i,v_i\sim\chi_1^2$, a chi-squared distribution with degree 1.

\end{proof}

\begin{proof}[Proof of Lemma \ref{quad_subgaussian}]
We will first show the bound of a quadratic term, $X^T(\Sigma^{-1}-(\Sigma_t')^{-1})X$. For a positive semi-definite matrix $M$, its eigenvectors are orthonormal and all of its eigenvalues are non-negative. By performing eigenvalue decomposition
\begin{align*}
X^{T}M X = X^{T}Q\Lambda Q^{T}X = \sum_{k} \lambda_{k}{<X, q_{k}>}^{2},
\end{align*}
where $\lambda_k$ is the $k$th diagonal element of $\Lambda$ and $q_k$ is the $k$th column of $Q$. Since eigenvectors are orthonormal
\begin{align*}
\sum_{k}{<X, q_{k}>}^{2} = X^{T}QQ^{T}X=<X,X>=||X||^{2}. 
\end{align*}
By the restriction that $X_t$ is bounded with $X_t \in [-V, V]$ for all $t$, 
\begin{align*}
0 \le X^{T}M X \le \lambda_{\text{max}}(M)V^{2}n,
\end{align*}
where $\lambda_{\text{min}}(M)$ and $\lambda_{\text{max}}(M)$ represent the smallest and biggest eigenvalues of $M$ respectively. 

As $\Sigma$ is symmetric, its inverse $(\Sigma)^{-1}$ is also symmetric and positive semi-definite. Substituting $M$ by $\Sigma^{-1}$ gives
\begin{align*}
0 \le X^{T}(\Sigma)^{-1} X \le \lambda_\text{max}(\Sigma^{-1})V^2n \le \frac{1}{\lambda_{\text{min}}(\Sigma)}V^{2}n,
\end{align*}
because eigenvalues of the inverse matrix are inverses of eigenvalues of an original matrix. 

Similarly for $(\Sigma_{t}')^{-1}$,
\begin{align*}
0 \le X^{T}(\Sigma_{t}')^{-1} X \le \lambda_{\text{max}}((\Sigma_{t}')^{-1})V^2n  \le \frac{1}{\lambda_{\text{min}}(\Sigma_{t}')}V^{2}n.
\end{align*}
Then the difference of the quadratic term of our interest is bounded as
\begin{align*}
-\frac{1}{\lambda_{\text{min}}(\Sigma_{t}')}V^{2}n \le X^{T}((\Sigma)^{-1}- (\Sigma_{t}')^{-1})X \le \frac{1}{\lambda_{\text{min}}(\Sigma)}V^{2}n,
\end{align*}
with $C_{t}=\frac{1}{\lambda_{\text{min}}(\Sigma)}+\frac{1}{\lambda_{\text{min}}(\Sigma_{t}')}$. Thus, we can conclude proof by Lemma \ref{bdd_subgaussian}.
\end{proof}

\begin{proof}[Proof of Lemma \ref{testnull2}]
For brevity, let $Z_t=X^T(\Sigma^{-1}-{\Sigma'_{t}}^{-1})X + \ln\left(\frac{|\Sigma|}{|\Sigma_{t}^{'}|}\right),\; t=1,...,n$. Lemma \ref{quad_subgaussian} implies that $Z_t - \mathbb{E}[Z_t]$ is $\frac{C_{t}V^2n}{2}$-subgaussian. Under the null hypothesis the expectation of $Z_t$ is defined as 

\begin{align*}
\mathbb{E}(Z_t|\mathbb{H}_0) &= \mathbb{E}\left[ X^T(\Sigma)^{-1}X - X^T(\Sigma_{t}^{'})^{-1}X+\ln\left(\frac{|\Sigma|}{|\Sigma_{t}^{'}|}\right)\middle|\mathbb{H}_0\right]\\
&=\mathbb{E}\left[ Tr(XX^T(\Sigma)^{-1}) - Tr(XX^T(\Sigma_{t}^{'})^{-1})+\ln\left(\frac{|\Sigma|}{|\Sigma_{t}^{'}|}\right)\middle|\mathbb{H}_0\right]\\
&= Tr(\Sigma(\Sigma)^{-1}) - Tr(\Sigma(\Sigma_{t}^{'})^{-1})+\ln\left(\frac{|\Sigma|}{|\Sigma_{t}^{'}|}\right)\\
&= n - Tr(\Sigma(\Sigma_{t}^{'})^{-1})+\ln\left(\frac{|\Sigma|}{|\Sigma_{t}^{'}|}\right).
\end{align*} 

Theorem \ref{chernoff bound} implies that $\mathbb{P}(Z_t \ge n - Tr(\Sigma(\Sigma_{t}^{'})^{-1})+\ln\left(\frac{|\Sigma|}{|\Sigma_{t}^{'}|}\right) + C_{t}V^2n\;\sqrt[]{0.5\ln(2n/\delta)}) \le \frac{\delta}{2n}$. By Lemma \ref{lemma:c01c02}, $C_{t} \le C_{0}$ for all $t\in[1,n]$. Thus, 
\begin{align*}
\mathbb{P}\left[Z_t \ge \max_t \left(n - Tr(\Sigma(\Sigma_{t}^{'})^{-1})+\ln\left(\frac{|\Sigma|}{|\Sigma_{t}^{'}|}\right)\right) + C_{0}V^2n\;\sqrt[]{0.5\ln(2/\delta)}\right] \le \frac{\delta}{2}.
\end{align*} 
As the above inequality is satisfied for all $t\in[1,n]$, it is also satisfied for $t*=\text{argmax}_t Z_t$. Thus we can conclude as follows.
\begin{align*}
\mathbb{P}\left[\max_t Z_t \ge \max_t \left(n - Tr(\Sigma(\Sigma_{t}^{'})^{-1})+\ln\left(\frac{|\Sigma|}{|\Sigma_{t}^{'}|}\right)\right) + C_{0}V^2n\;\sqrt[]{0.5\ln(2n/\delta)}\right] \le \frac{\delta}{2}.
\end{align*} 
\end{proof}

\begin{proof}[Proof of Lemma \ref{testalt2}]
For brevity, let $Z_t=X^T(\Sigma^{-1}-{\Sigma'_{t}}^{-1})X + \ln\left(\frac{|\Sigma|}{|\Sigma_{t}^{'}|}\right),\; t=1,...,n$. Lemma \ref{quad_subgaussian} implies that $Z_t - \mathbb{E}[Z_t]$ is $\frac{C_{t}V^2n}{2}$-subgaussian. Under the alternative hypothesis the expectation of $Z_t$ is defined as 

\begin{align*}
\mathbb{E}(Z_t|\mathbb{H}_{1,t}) &= \mathbb{E}\left[ X^T(\Sigma)^{-1}X - X^T(\Sigma_{t}^{'})^{-1}X+\ln\left(\frac{|\Sigma|}{|\Sigma_{t}^{'}|}\right)\middle|\mathbb{H}_{1,t}\right]\\
&=\mathbb{E}\left[ Tr(XX^T(\Sigma)^{-1}) - Tr(XX^T(\Sigma_{t}^{'})^{-1})+\ln\left(\frac{|\Sigma|}{|\Sigma_{t}^{'}|}\right)\middle|\mathbb{H}_{1,t}\right]\\
&= Tr(\Sigma'_{t}(\Sigma)^{-1}) - Tr(\Sigma'_{t}(\Sigma_{t}^{'})^{-1})+\ln\left(\frac{|\Sigma|}{|\Sigma_{t}^{'}|}\right)\\
&= Tr(\Sigma'_{t}(\Sigma)^{-1}) - n +\ln\left(\frac{|\Sigma|}{|\Sigma_{t}^{'}|}\right).
\end{align*} 

Theorem \ref{chernoff bound} implies that $\mathbb{P}(Z_t \le Tr(\Sigma_{t}^{'}(\Sigma)^{-1}) - n +\ln\left(\frac{|\Sigma|}{|\Sigma_{t}^{'}|}\right) - C_{t}V^2n\;\sqrt[]{0.5\ln(2/\delta)}) \le \frac{\delta}{2}$. By Lemma \ref{lemma:c01c02}, $C_{t} \le C_{0}$ for all $t\in[1,n]$. Thus, 
\begin{align*}
\mathbb{P}\left[Z_t \le \min_t \left(Tr(\Sigma_{t}^{'}(\Sigma)^{-1}) - n +\ln\left(\frac{|\Sigma|}{|\Sigma_{t}^{'}|}\right)\right) - C_{0}V^2n\;\sqrt[]{0.5\ln(2/\delta)}\right] \le \frac{\delta}{2}.
\end{align*} 
As the above inequality is satisfied for all $t\in[1,n]$, it is also satisfied for $t*=\text{argmax}_t Z_t$. Thus we can conclude as follows.
\begin{align*}
\mathbb{P}\left[\max_t Z_t \le \min_t \left(Tr(\Sigma_{t}^{'}(\Sigma)^{-1}) - n +\ln\left(\frac{|\Sigma|}{|\Sigma_{t}^{'}|}\right)\right) - C_{0}V^2n\;\sqrt[]{0.5\ln(2/\delta)}\right] \le \frac{\delta}{2}.
\end{align*}
\end{proof}

\begin{proof}[Proof of Theorem \ref{thm:main_thm_given_cp}]
Let's define the gain of CBOCPD over BOCPD as 
$$
|\mathbb{E}[x_t|\emptyset]-\mathbb{E}_{BO}[x_t|x_{1:t-1}]|-|\mathbb{E}[x_t|\emptyset]-\mathbb{E}_{CBO}[x_t|x_{1:t-1}]|.
$$
In the case $\mathfrak{T}_{GLRT}^*=1$, the gain is written as 
\begin{align*}
    &|\mathbb{E}[x_t|\emptyset]-\mathbb{E}_{BO}[x_t|x_{1:t-1}]|-|\mathbb{E}[x_t|\emptyset]-\mathbb{E}[x_t|\emptyset]|\\
    &\quad=|\mathbb{E}[x_t|\emptyset]-\mathbb{E}_{BO}[x_t|x_{1:t-1}]|\\
    &\quad=\left\vert\sum_{r_{t-1}=0}^{t-1}\mathbb{E}[x_t|\emptyset]\mathbb{P}_{BO}(r_{t-1}|x_{1:t-1})-\sum_{r_{t-1}=0}^{t-1}\mathbb{E}[x_t|x_{t-1}^{(r)}]\mathbb{P}_{BO}(r_{t-1}|x_{1:t-1})\right\vert\\
    &\quad=\left\vert\sum_{r_{t-1}=1}^{t-1}\left(\mathbb{E}[x_t|\emptyset]-\mathbb{E}[x_t|x_{t-1}^{(r)}]\right)\mathbb{P}_{BO}(r_{t-1}|x_{1:t-1})\right\vert\\
    &\quad\ge\left\vert\epsilon_L\sum_{r_{t-1}=1}^{t-1}\mathbb{P}_{BO}(r_{t-1}|x_{1:t-1})\right\vert=\epsilon_L\cdot\alpha_0.
\end{align*}
The inequality in the last line comes from the assumption $\epsilon_L \le |\mathbb{E}[x_t|\emptyset]-\mathbb{E}[x_t|x_{i:t-1}]| \le \epsilon_U$.

In the case $\mathfrak{T}_{GLRT}^*=0$, the loss of CBOCPD is written as 
\begin{align*}
    &|\mathbb{E}[x_t|\emptyset]-\sum_{r_{t-1}=1}^{t-1}\mathbb{E}[x_t|x_{t-1}^{(r)}]\cdot \mathbb{P}_{CBO}(r_{t-1}|x_{1:t-1})|\\
    &\quad=\left\vert\sum_{r_{t-1}=1}^{t-1}\mathbb{E}[x_t|\emptyset]\mathbb{P}_{CBO}(r_{t-1}|x_{1:t-1})-\sum_{r_{t-1}=1}^{t-1}\mathbb{E}[x_t|x_{t-1}^{(r)}]\mathbb{P}_{CBO}(r_{t-1}|x_{1:t-1})\right\vert\\
    &\quad=\left\vert\sum_{r_{t-1}=1}^{t-1}\left(\mathbb{E}[x_t|\emptyset]-\mathbb{E}[x_t|x_{t-1}^{(r)}]\right)\mathbb{P}_{CBO}(r_{t-1}|x_{1:t-1})\right\vert\\
    &\quad\le\left\vert\epsilon_U\sum_{r_{t-1}=1}^{t-1}\mathbb{P}_{CBO}(r_{t-1}|x_{1:t-1})\right\vert=\epsilon_U.
\end{align*}
The equation in the last line comes from the fact that $\Sigma_{r_{t-1}}^{t-1}\mathbb{P}_{CBO}(r_{t-1}|x_{1:t-1})=1$ under the CBOCPD when non-change is detected. 
Then, the gain is bounded as
\begin{align*}
    &|\mathbb{E}[x_t|\emptyset]-\mathbb{E}_{BO}[x_t|x_{1:t-1}]|-|\mathbb{E}[x_t|\emptyset]-\mathbb{E}_{CBO}[x_t|x_{1:t-1}]| \ge \epsilon_L\cdot\alpha_0-\epsilon_U.
\end{align*}
As $\mathbb{P}(\mathfrak{T}_{GLRT}^*=1)=(1-\delta_0^{\text{\rom{2}}})(1-\delta_1^{\text{\rom{2}}})$ and $\mathbb{P}(\mathfrak{T}_{GLRT}^*=0)=\delta_0^{\text{\rom{2}}}\delta_1^{\text{\rom{2}}}$ in non-stationary case, the expected gain is bounded from below as
\begin{align*}
&\mathbb{E}(|\mathbb{E}[x_t|\emptyset]-\mathbb{E}_{BO}[x_t|x_{1:t-1}]|-|\mathbb{E}[x_t|\emptyset]-\mathbb{E}_{CBO}[x_t|x_{1:t-1}]|)\\
&\quad\ge\epsilon_L\alpha_0(1-\delta_0^{\text{\rom{2}}})(1-\delta_1^{\text{\rom{2}}})+(\epsilon_L\alpha_0-\epsilon_U)\delta_0^{\text{\rom{2}}}\delta_1^{\text{\rom{2}}}\ge0
\end{align*}
where the last inequality follows the assumption. Thus we can conclude that the expected gain is non-negative.

\end{proof}

\begin{proof}[Proof of Theorem \ref{thm:main_thm_wo_cp}]
Let's define the gain of CBOCPD over BOCPD as 
$$
|\mathbb{E}[x_t|x_{1:t-1}]-\mathbb{E}_{BO}[x_t|x_{1:t-1}]|-|\mathbb{E}[x_t|x_{1:t-1}]-\mathbb{E}_{CBO}[x_t|x_{1:t-1}]|.
$$
The loss of BOCPD is written as 
\begin{align*}
    &|\mathbb{E}[x_t|x_{1:t-1}]-\mathbb{E}_{BO}[x_t|x_{1:t-1}]|\\
    &\quad=\left\vert\sum_{r_{t-1}=0}^{t-1}\mathbb{E}[x_t|x_{1:t-1}]\mathbb{P}_{BO}(r_{t-1}|x_{1:t-1})-\sum_{r_{t-1}=0}^{t-1}\mathbb{E}[x_t|x_{t-1}^{(r)}]\mathbb{P}_{BO}(r_{t-1}|x_{1:t-1})\right\vert\\
    &\quad=\left\vert\sum_{r_{t-1}=0}^{t-2}\mathbb{E}[x_t|x_{1:t-1}]\mathbb{P}_{BO}(r_{t-1}|x_{1:t-1})-\sum_{r_{t-1}=0}^{t-2}\mathbb{E}[x_t|x_{t-1}^{(r)}]\mathbb{P}_{BO}(r_{t-1}|x_{1:t-1})\right\vert\\ 
    &\quad\ge\left\vert\epsilon_L\sum_{r_{t-1}=0}^{t-2}\mathbb{P}_{BO}(r_{t-1}|x_{1:t-1})\right\vert=\epsilon_L\cdot\alpha_{t-1}.
\end{align*}
In the case $\mathfrak{T}_{GLRT}^*=1$, the loss of CBOCPD is written as 
\begin{align*}
    &|\mathbb{E}[x_t|x_{1:t-1}]-\mathbb{E}_{CBO}[x_t|x_{1:t-1}]|=|\mathbb{E}[x_t|x_{1:t-1}]-\mathbb{E}[x_t|\emptyset]|\le\epsilon_U. 
\end{align*}
Then, the gain is bounded as
\begin{align*}
    &|\mathbb{E}[x_t|x_{1:t-1}]-\mathbb{E}_{BO}[x_t|x_{1:t-1}]|-|\mathbb{E}[x_t|x_{1:t-1}]-\mathbb{E}_{CBO}[x_t|x_{1:t-1}]| \ge \epsilon_L\cdot\alpha_{t-1}-\epsilon_U.
\end{align*}
In the case $\mathfrak{T}_{GLRT}^*=0$, the loss of CBOCPD is written as 
\begin{align*}
    &|\mathbb{E}[x_t|x_{1:t-1}]-\sum_{r_{t-1}=1}^{t-1}\mathbb{E}[x_t|x_{t-1}^{(r)}]\cdot \mathbb{P}_{CBO}(r_{t-1}|x_{1:t-1})|\\
    &\quad=\left\vert\sum_{r_{t-1}=1}^{t-1}\mathbb{E}[x_t|x_{1:t-1}]\mathbb{P}_{CBO}(r_{t-1}|x_{1:t-1})-\sum_{r_{t-1}=1}^{t-1}\mathbb{E}[x_t|x_{t-1}^{(r)}]\mathbb{P}_{CBO}(r_{t-1}|x_{1:t-1})\right\vert\\\
    &\quad=\left\vert\sum_{r_{t-1}=1}^{t-2}\mathbb{E}[x_t|x_{1:t-1}]\mathbb{P}_{CBO}(r_{t-1}|x_{1:t-1})-\sum_{r_{t-1}=1}^{t-2}\mathbb{E}[x_t|x_{t-1}^{(r)}]\mathbb{P}_{CBO}(r_{t-1}|x_{1:t-1})\right\vert\\
    &\quad\le\left\vert\epsilon_L\sum_{r_{t-1}=1}^{t-2}\mathbb{P}_{CBO}(r_{t-1}|x_{1:t-1})\right\vert=\epsilon_U\cdot\beta_{t-1}.
\end{align*}
Then, the gain is bounded as
\begin{align*}
    &|\mathbb{E}[x_t|x_{1:t-1}]-\mathbb{E}_{BO}[x_t|x_{1:t-1}]|-|\mathbb{E}[x_t|x_{1:t-1}]-\mathbb{E}_{CBO}[x_t|x_{1:t-1}]|\ge \epsilon_L\cdot\alpha_{t-1}-\epsilon_U\cdot\beta_{t-1}.
\end{align*}
As $\mathbb{P}(\mathfrak{T}_{GLRT}^*=1)=\delta_0^{\text{\rom{1}}}\delta_1^{\text{\rom{1}}}$ and $\mathbb{P}(\mathfrak{T}_{GLRT}^*=0)=(1-\delta_0^{\text{\rom{1}}})(1-\delta_1^{\text{\rom{1}}})$ in stationary case, the expected gain is bounded from below as
\begin{align*}
&\mathbb{E}(|\mathbb{E}[x_t|x_{1:t-1}]-\mathbb{E}_{BO}[x_t|x_{1:t-1}]|-|\mathbb{E}[x_t|x_{1:t-1}]-\mathbb{E}_{CBO}[x_t|x_{1:t-1}]|)\\
&\quad\ge(\epsilon_L\alpha_{t-1}-\epsilon_U)\delta_0^{\text{\rom{1}}}\delta_1^{\text{\rom{1}}}+(\epsilon_L\alpha_{t-1}-\epsilon_U\beta_{t-1})(1-\delta_0^{\text{\rom{1}}})(1-\delta_1^{\text{\rom{1}}})\ge0
\end{align*}
where the last inequality follows the assumption. Thus we can conclude that the expected gain is non-negative.

\end{proof}

\end{document}